\theoremstyle{plain}
\newtheorem{theorem}{Theorem}[section]
\newtheorem{lemma}[theorem]{Lemma}
\theoremstyle{definition}
\newtheorem{assumption}[theorem]{Assumption}
\theoremstyle{remark}
\DeclareMathOperator*{\argmin}{arg\,min}
\DeclareMathOperator*{\argmax}{arg\,max}
\newcommand{\E}{{\mathbb E}}
\newcommand{\underE}[2]{{\E}_{\begin{subarray}{c}#1 \end{subarray}}\left[ #2 \right]}
\icmltitlerunning{RVI-SAC}
\begin{document}

\twocolumn[
  \icmltitle{RVI-SAC:\\
    Average Reward Off-Policy Deep Reinforcement Learning}



  \icmlsetsymbol{equal}{*}

  \begin{icmlauthorlist}
    \icmlauthor{Yukinari Hisaki}{titech}
    \icmlauthor{Isao Ono}{titech}
  \end{icmlauthorlist}

  \icmlaffiliation{titech}{Tokyo Institute of Technology Yokohama, Kanagawa, Japan}

  \icmlcorrespondingauthor{Yukinari Hisaki}{hiskai.y@ic.c.titech.ac.jp}
  \icmlcorrespondingauthor{Isao Ono}{isao@c.titech.ac.jp}

  \icmlkeywords{Machine Learning, ICML}

  \vskip 0.3in
]



\printAffiliationsAndNotice{}  

\begin{abstract}
  In this paper, we propose an off-policy deep reinforcement learning (DRL) method utilizing the average reward criterion.
  While most existing DRL methods employ the discounted reward criterion, this can potentially lead to a discrepancy between the training objective and performance metrics in continuing tasks, making the average reward criterion a recommended alternative.
  We introduce RVI-SAC, an extension of the state-of-the-art off-policy DRL method, Soft Actor-Critic (SAC) \cite{Haarnoja2018SAC,Haarnoja2018SACv2}, to the average reward criterion.
  Our proposal consists of (1) Critic updates based on RVI Q-learning \cite{Abounadi2001RVI}, (2) Actor updates introduced by the average reward soft policy improvement theorem, and (3) automatic adjustment of Reset Cost enabling the average reward reinforcement learning to be applied to tasks with termination.
  We apply our method to the Gymnasium's Mujoco tasks, a subset of locomotion tasks, and demonstrate that RVI-SAC shows competitive performance compared to existing methods.
\end{abstract}

\section{Introduction}

Model-free reinforcement learning aims to acquire optimal policies through interaction with the environment.
Particularly, Deep Reinforcement Learning (DRL), which approximates functions such as policy or value using Neural Networks, has seen rapid development in recent years.
This advancement has enabled solving tasks with high-dimensional continuous action spaces, such as those in OpenAI Gym's Mujoco tasks \cite{todorov2012mujoco}.
In the realm of DRL methods applicable to tasks with high-dimensional continuous action spaces, methods such as TRPO \cite{Schulman2015TRPO}, PPO \cite{Schulman2017PPO}, DDPG \cite{Silver2014DPG, Lillicrap2016DDPG}, TD3 \cite{Fujimoto2018TD3}, and SAC \cite{Haarnoja2018SAC,Haarnoja2018SACv2} are well-known.
These methods utilize the discounted reward criterion, which is applicable to a variety of MDP-formulated tasks \cite{Puterman1994MDP}.
In particular, for continuing tasks where there is no natural breakpoint in episodes, such as in robot locomotion \cite{todorov2012mujoco} or Access Control Queuing Tasks\cite{Sutton2018RL}, where the interaction between an agent and an environment can continue indefinitely, the discount rate plays a role in keeping the infinite horizon return bounded.
However, discounting introduces an undesirable effect in continuing tasks by prioritizing rewards closer to the current time over those in the future.
An approach to mitigate this effect is to bring the discount rate closer to 1, but it is commonly known that a large discount rate can lead to instability and slower convergence\cite{Fujimoto2018TD3,Dewanto2021ExaminingAR}.

In recent years, the average reward criterion has begun to gain attention as an alternative to the discounted reward criterion.
Reinforcement learning using the average reward criterion aims to maximize the time average of the infinite horizon return in continuing tasks.
In continuing tasks, the average reward criterion is considered more natural than the discounted reward criterion.
Even within the realm of DRL with continuous action space, although few, there have been proposals for methods that utilize the average reward criterion.
Notably, ATRPO \cite{Yiming2021ATRPO} and APO \cite{Xiaoteng2021APO}, which are extensions of the state-of-the-art on-policy DRL methods, TRPO and PPO, to the average reward criterion, have been reported to demonstrate performance on par with or superior to methods using the discounted reward criterion in Mujoco tasks \cite{todorov2012mujoco}.

Research combining the average reward criterion with off-policy methods, generally known for their higher sample efficiency than on-policy approaches, remains limited.
There are several theoretical and tabular approaches to off-policy methods using the average reward criterion, including R-learning \cite{Schwartz1993RLearning,Singh1999RLearning}, RVI Q-learning \cite{Abounadi2001RVI}, Differential Q-learning \cite{Wan2020DifferentialQLearning}, and CSV Q-learning \cite{Yang2016CSVQLearning}.
However, to our knowledge, the only off-policy DRL method with continuous action space that employs the average reward criterion is ARO-DDPG \cite{Saxena2023ARODDPG} which optimize determinisitic policy.
In DRL research using discount rate, Maximum Entropy Reinforcement Learning, which optimizes stochastic policies for entropy-augmented objectives, is known to improve sample efficiency significantly.
Off-policy DRL methods with continuous action space that have adopted this concept \cite{Mnih2016A3C,Haarnoja2017SoftQ,Haarnoja2018SAC,Haarnoja2018SACv2,Abbas2018MPO} have achieved success.
However, to the best of our knowledge, there are no existing DRL methods using the average reward criterion as powerful and sample-efficient as Soft-Actor Critic.

Our goal is to propose RVI-SAC, an off-policy Actor-Critic DRL method that employs the concept of Maximum Entropy Reinforcement Learning under the average reward criterion.
In our proposed method, we use a new Q-Network update method based on RVI Q-learning to update the Critic.
Unlike Differential Q-learning, which was the basis for ARO-DDPG, RVI Q-learning can uniquely determine the convergence point of the Q function \cite{Abounadi2001RVI,Wan2020DifferentialQLearning}.
We identify problems that arise when naively extending RVI Q-learning to a Neural Network update method and address these problems by introducing a technique called Delayed f(Q) Update, enabling the extension of RVI Q-learning to Neural Networks.
We also provide an asymptotic convergence analysis of RVI Q-learning with the Delayed f(Q) Update technique, in a tabular setting using ODE.
Regarding the update of the Actor, we construct a new policy update method that guarantees the improvement of soft average reward by deriving an average reward soft policy improvement theorem, based on soft policy improvement theorem in discounted reward \cite{Haarnoja2018SAC,Haarnoja2018SACv2}.

Our proposed approach addresses the key challenge in applying the average reward criterion to tasks that are not purely continuing, such as in robotic locomotion tasks, for example, Mujoco's Ant, Hopper, Walker2d and Humanoid.
In these tasks, robots may fall, leading to the termination of an episode, which is not permissible in average reward reinforcement learning that aims to optimize the time average of the infinite horizon return.
Similar to ATRPO \cite{Yiming2021ATRPO}, our method introduces a procedure in these tasks that, after a fall, gives a penalty reward (Reset Cost) and resets the environment.
In ATRPO, the Reset Cost was provided as a hyperparameter.
However, the optimal Reset Cost is non-trivial, and setting a sub-optimal Reset Cost could lead to decreased performance.
In our proposed method, we introduce a technique for automatically adjusting the Reset Cost by formulating a constrained optimization problem where the frequency of environment resets, which is independent of the reward scale, is constrained.

Our main contributions in this work are as follows:
\begin{itemize}
  \item We introduce a new off-policy Actor-Critic DRL method, RVI-SAC, utilizing the average reward criterion.
        This method is comprised of two key components:
        (1) a novel Q-Network update approach, RVI Q-learning with the Delayed f(Q) Update technique, and
        (2) a policy update method derived from the average reward soft policy improvement theorem.
        We further provide an asymptotic convergence analysis of RVI Q-learning with the Delayed f(Q) Update technique in a tabular setting using ODE.
  \item To adapt our proposed method for tasks that are not purely continuing, we incorporate environment reset and Reset Cost\cite{Yiming2021ATRPO}.
        By formulating a constrained optimization problem with a constraint based on the frequency of environment resets, an independent measure of the reward scale, we propose a method for automatically adjusting the Reset Cost.
  \item Through benchmark experiments using Mujoco, we demonstrate that our proposed method exhibits competitive performance compared to SAC\cite{Haarnoja2018SACv2} with various discount rates and ARO-DDPG \cite{Saxena2023ARODDPG}.
\end{itemize}

\section{Preliminaries}

In this section, we introduce problem setting and average reward reinforcement learning, which is the core concept of our proposed method.
The mathematical notations employed throughout this paper are detailed in Appendix \ref{sec:mathematical_notation}.

\subsection{Markov Decision Process}

We define the interaction between the environment and the agent as a Markov Decision Process (MDP) $\mathcal{M} = (\mathcal{S}, \mathcal{A}, r, p)$.
Here, $s \in \mathcal{S}$ represents the state space, $a \in \mathcal{A}$ represents the action space, $r: \mathcal{S} \times \mathcal{A} \rightarrow \mathbb{R}, |r(\cdot, \cdot)| \leq \|r\|_{\infty} $ is the reward function, and $p: \mathcal{S} \times \mathcal{A} \times \mathcal{S} \to [0, 1]$ is the state transition function.
At each discrete time step $t=0,1,2,\ldots$, the agent receives a state $S_t \in \mathcal{S}$ from the MDP and selects an action $A_t \in \mathcal{A}$.
The environment then provides a reward $R_t = r(S_t, A_t)$ and the next state $S_{t+1} \in \mathcal{S},$ repeating this process.
The state transition function is defined for all $s, s^{\prime} \in \mathcal{S}, a \in \mathcal{A}$ as $p\left(s^{\prime}\mid s, a\right) := \operatorname{Pr}\left(S_{t+1}=s^{\prime} \mid S_t=\right.$ $\left.s, A_t=a\right)$.
Furthermore, we use a stationary Markov policy $\pi: \mathcal{S} \times \mathcal{A} \rightarrow [0, 1]$ as the criterion for action selection.
This represents the probability of selecting an action $a \in \mathcal{A}$ given a state $s \in \mathcal{S}$ and is defined as $\pi(a|s) := \operatorname{Pr}\left(A_t=a \mid S_t=s\right)$.

\subsection{Average Reward Reinforcement Learning}
\label{sec:average_reward_reinforcement_learning}

To simplify the discussion that follows, we make the following assumption for the MDPs where average reward reinforcement learning is applied:
\begin{assumption}
  \label{assumption:ergodic}
  For any policy $\pi$, the MDP $\mathcal{M}$ is ergodic.
\end{assumption}
Under this assumption, for any policy $\pi$, a stationary state distribution $d_{\pi}(s)$ exists.
The distribution including actions is denoted as $d_\pi(s, a) = d_\pi(s) \pi(s|a)$.

The average reward for a policy $\pi$ is defined as:
\begin{equation}
  \label{eq:average_reward}
  \rho^{\pi} := \lim_{T \rightarrow \infty} \frac{1}{T} \underE{\pi}{\sum_{t=0}^{T} R_t} = \sum_{s \in \mathcal{S}, a \in \mathcal{A}} d_\pi(s,a) r(s,a) .
\end{equation}
The optimal policy in average reward criterion is defined as:
\begin{equation}
  \pi^* = \argmax_{\pi} \rho^{\pi}.
\end{equation}

The Q function for average reward reinforcement learning is defined as:
\begin{equation}\label{eq:average_reward_Q}
  Q^\pi(s,a) := \underE{\pi}{\sum_{t=0}^{\infty} (R_t - \rho^\pi) | S_0 = s, A_0 = a} .
\end{equation}

The optimal Bellman equation for average reward is as follows:
\begin{equation}
  \label{eq:average_reward_optimal_bellman_equation}
  \begin{aligned}
    Q(s, a) = r(s, a) - \rho + \sum_{s' \in \mathcal{S}} p(s'|s, a) \max_{a'} Q(s', a'), \\
    \forall (s,a) \in \mathcal{S} \times \mathcal{A} .
  \end{aligned}
\end{equation}
From existing research (e.g., \citet{Puterman1994MDP}), it is known that this equation has the following properties:
\begin{itemize}
  \item There exists a unique solution for $\rho$ as $\rho = \rho^{\pi^*}$.
  \item There exisits a unique solution only up to a constant $c \in \mathbb{R}$ for $Q(s,a)$ as $q(s,a) = Q^{\pi^*}(s,a) + c$.
  \item For the solution $Q$ to the equation, a deterministic policy $\mu(s) = \argmax_{a} q(s,a)$ is one of the optimal policies.
\end{itemize}
Henceforth, we denote $\rho^{\pi^*}$ as $\rho^*$.

\subsection{RVI Q-learning}
\label{sec:RVI_Q_learning}

RVI Q-learning \cite{Konda1999ActorCriticSA,Wan2020DifferentialQLearning} is one of the average reward reinforcement learning methods for the tabular Q function, and updates the Q function as follows:
\begin{eqnarray}
  \label{eq:RVI_Q_learning}
  \begin{aligned}
     & Q_{t+1}(S_t, A_t)  = Q_t(S_t, A_t) +                                              \\
     & \alpha_t \left( R_t - f(Q_t) + \max_{a'} Q_t (S_{t+1}, a') - Q_t(S_t, A_t)\right).
  \end{aligned}
\end{eqnarray}
From the convergence proof of generalized RVI Q-learning \cite{Wan2020DifferentialQLearning}, the function $f$ can be any function that satisfies the following assumption:
\begin{assumption}[From \citet{Wan2020DifferentialQLearning}]
  $f: \mathbb{R}^{|\mathcal{S} \times \mathcal{A}|} \rightarrow \mathbb{R}$ is Lipschitz, and there exists some $u>0$ such that for all $c \in \mathbb{R}$ and $x \in \mathbb{R}^{|\mathcal{S} \times \mathcal{A}|}, f(e)=u$, $f(x+c e)=f(x)+c u$ and $f(c x)=c f(x)$.
\end{assumption}
In practice, $f$ often takes forms such as $f(Q) = Q(S,A), \max_{a} Q(S,a)$ using arbitrary state/action pair $(S,A)$ as the Reference State, or the sum over all state/action pairs, $f(Q) = g \sum_{(s,a) \in \mathcal{S}\times\mathcal{A}} Q(s,a), g \sum_{s \in \mathcal{S}} \max_{a} Q(s,a)$, with gain $\forall g > 0$.
This assumption plays an important role in demonstrating the convergence of the algorithm. Intuitively, this indicates that the function $f$, satisfying this assumption, can include functions that correspond to specific elements of the vector $x$ or their weighted linear sum.

This algorithm, under certain appropriate assumptions, converges \textit{almost surely} to a unique solution $q^*$, and it has been shown that $q^*$ satisfies both the optimal Bellman equation (Equation \ref{eq:average_reward_optimal_bellman_equation}) and
$$
  \rho^* = f(q^*)
$$
as demonstrated in \citet{Wan2020DifferentialQLearning}.

\section{Proposed Method}

We propose a new off-policy Actor-Critic DRL method based on average reward.
To this end, in Section \ref{sec:RVI_Q_learning_based_Q_Network_update},
we present a Critic update method based on RVI Q-learning,
and in Section \ref{sec:Average_Reward_Soft_Policy_Improvement_Theorem},
we demonstrate an Actor update method based on SAC \cite{Haarnoja2018SAC,Haarnoja2018SACv2}.
Additionally, in Section \ref{sec:Automatic_Reset_Cost_Adjustment}, we introduce a method to apply average reward reinforcement learning to problems that are not purely continuing tasks, such as locomotion tasks with termination.
The overall algorithm is detailed in Appendix \ref{sec:Overall_RVI_SAC_algorithm_and_implementation}.

\subsection{RVI Q-learning based Q-Network update}
\label{sec:RVI_Q_learning_based_Q_Network_update}

We extend the RVI Q-learning algorithm described in Equation \ref{eq:RVI_Q_learning} to an updating method for the Q function represented by a Neural Network.
Following traditional approach in Neural Network-based Q-learning, we update the parameters $\phi$ of the Q-Network $Q_{\phi}$ using the target:
$$
  Y(r, s') = r - f(Q_{\phi'}) + \max_{a'} Q_{\phi'}(s', a'),
$$
and by minimizing the following loss function:
$$
  \frac{1}{|\mathcal{B}|} \sum_{(s, a, r, s') \in \mathcal{B}}\left(Y(r, s') - Q_{\phi}(s, a)\right)^2 .
$$
$\mathcal{B}$ is a mini-batch uniformly sampled from the Replay Buffer $\mathcal{D}$, which accumulates experiences $(S_t, A_t, R_t, S_{t+1})$ obtained during training, and $\phi'$ are the parameters of the target network \cite{Mnih2013DQN}.

In implementing this method, we need to consider the following two points:

The first point is the choice of function $f$.
As mentioned in Section \ref{sec:RVI_Q_learning}, tabular RVI Q-learning typically uses a Reference State $(S, A) \in \mathcal{S} \times \mathcal{A}$ or the sum over all state/action pairs to calculate $f(Q)$.
Using the Reference State is easily applicable to problems with continuous state/action spaces in Neural Network-based methods.
However, concerns arise about performance dependency on the visitation frequency to the Reference State and the accuracy of its Q-value \cite{Wan2020DifferentialQLearning}.
On the other hand, calculating the sum over all state/action pairs does not require a Reference State but is not directly computable with Neural Networks for $f(Q_{\phi'})$.
To address these issues, we substitute $f(Q_{\phi'})$ with $f(Q_{\phi'}; \mathcal{B})$, calculated using mini-batch $\mathcal{B}$, as shown in Equation \ref{eq:F_Sampled}:
\begin{equation}
  \label{eq:F_Sampled}
  f(Q_{\phi'}; \mathcal{B}) = \frac{1}{|\mathcal{B}|} \sum_{s \in \mathcal{B}} \max_{a} Q_{\phi'}(s, a).
\end{equation}
Equation \ref{eq:F_Sampled} serves as an unbiased estimator of $f(Q_{\phi'})$ when set as:
\begin{equation}\label{eq:F_Sampled_unbiased_estimator}
  f(Q_{\phi'}) = \underE{s \sim d_{b}(\cdot)}{\max_{a} Q_{\phi'}(s, a)},
\end{equation}
where $b$ represents the behavior policy in off-policy methods, and $d_{b}(\cdot)$ denotes the stationary state distribution under the behavior policy $b$.
In our method, we use settings as shown in Equations \ref{eq:F_Sampled} and \ref{eq:F_Sampled_unbiased_estimator}, but this discussion is applicable to any setting that satisfies:
\begin{equation}
  \label{eq:F_Sampled_general}
  f(Q_{\phi'}) = \underE{X_t}{f(Q_{\phi'}; X_t)}
\end{equation}
for the random variable $X_t$. Thus, the target value used for the Q-Network update becomes:
\begin{equation}
  \label{eq:Q_Network_target1}
  Y(r, s'; \mathcal{B}) = r - f(Q_{\phi'}; \mathcal{B}) + \max_{a'} Q_{\phi'}(s', a').
\end{equation}

The second point is that the variance of the sample value $f(Q_{\phi'}; \mathcal{B})$ (Equation \ref{eq:F_Sampled}) can increase the variance of the target value (Equation \ref{eq:Q_Network_target1}), potentially leading to instability in learning.
This issue is pronounced when the variance of the Q-values is large.
A high variance in Q-values can potentially lead to an increase in the variance of the target values, creating a feedback loop that might further amplify the variance of Q-values.
To mitigate the variance of the target value, we propose the \textbf{Delayed f(Q) Update} technique.
Delayed f(Q) Update employs a value $\xi_t$, updated as follows, instead of using $f(Q_{\phi'}; \mathcal{B})$ for calculating the target value:
$$
  \xi_{t+1} = \xi_{t} + \beta_t \left(f(Q_{\phi'};\mathcal{B}) - \xi_t\right),
$$
$\beta_t$ denotes the learning rate for $\xi_t$. The new target value using $\xi_t$ is then:
$$
  Y(r, s'; \xi_t) = r - \xi_t + \max_{a'} Q_{\phi'}(s', a').
$$
In this case, $\xi_t$ serves as a smoothed value of $f(Q_{\phi'}; \mathcal{B})$, and this update method is expected to reduce the variance of the target value.

\textbf{Theoretical Analysis of Delayed f(Q) Update}

We reinterpret the Q-Network update method using Delayed f(Q) Update in the context of a tabular Q-learning algorithm, it can be expressed as follows:
\small
\begin{equation}
  \begin{aligned}
    \label{eq:RVI_Q_learning_with_Delayed_fQ_Update}
    Q_{t+1}(S_t, A_t) & = Q_{t}(S_t, A_t) +                                                                   \\
                      & \alpha_{t} \left(R_t - \xi_t + \max_{a'} Q_{t}(S_{t+1}, a') - Q_{t}(S_t, A_t)\right), \\
    \xi_{t+1}         & = \xi_{t} + \beta_t \left(f(Q_t; X_t) - \xi_t\right).
  \end{aligned}
\end{equation}
\normalsize
This update formula is a specialization of asynchronous stochastic approximation (SA) on two time scales \cite{Borkar1997TwoTimeScalesSA,Konda1999ActorCriticSA}.
By selecting appropriate learning rates such that $\frac{\alpha_t}{\beta_t} \rightarrow 0$, the update of $\xi_t$ in Equation \ref{eq:RVI_Q_learning_with_Delayed_fQ_Update} can be considered faster relative to the update of $Q_t$.
Therefore, since $\xi_t$ can be considered a ``relative constant", it can be viewed as being equivalent to $f(Q_t)$.
The convergence of this algorithm is discussed in Appendix \ref{sec:convergence_proof_of_RVI_Q_learning_with_Delayed_fQ_Update} and summarized in Theorem \ref{thm:convergence_of_RVI_Q_learning_with_Delayed_fQ_Update}:
\begin{theorem}[Sketch]
  \label{thm:convergence_of_RVI_Q_learning_with_Delayed_fQ_Update}
  The algorithm expressed by the following equations converges \textit{almost surely} to a uniquely determined $q^*$ under appropriate assumptions (see Appendix \ref{sec:convergence_proof_of_RVI_Q_learning_with_Delayed_fQ_Update}):
  \small
  $$
    \begin{aligned}
      Q_{t+1}(S_t, A_t) & = Q_{t}(S_t, A_t) +                                                                                                              \\
                        & \alpha_{t} \left(R_t - \hat{\xi}_t + \max_{a'} Q_{t}(S_{t+1}, a') - Q_{t}(S_t, A_t)\right),                                      \\
      \xi_{t+1}         & = \xi_{t} + \beta_t \left(f(Q_t; X) - \xi_t\right),                                                                              \\
                        & \textup{where } \hat{\xi}_t =  \textup{{clip}}(\xi_t, - \|r\|_\infty - \epsilon, \|r\|_\infty + \epsilon), \forall \epsilon > 0.
    \end{aligned}
  $$
  \normalsize
\end{theorem}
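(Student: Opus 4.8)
The plan is to read \eqref{eq:RVI_Q_learning_with_Delayed_fQ_Update} (with the clipped offset $\hat\xi_t$) as an asynchronous two-timescale stochastic approximation in the sense of \citet{Borkar1997TwoTimeScalesSA}, and then to apply the ODE method: separate the timescales, identify the limiting differential equation of the slow $Q$-iterate, show it has a unique globally asymptotically stable equilibrium, and verify the boundedness hypothesis. Throughout, Assumption~\ref{assumption:ergodic} is used to guarantee that along any sample trajectory every pair $(s,a)$ is visited with a strictly positive asymptotic frequency, which is what the asynchronous version of the method requires.

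First I would separate the timescales. Since $\alpha_t/\beta_t\to 0$, the $\xi$-recursion is fast and the $Q$-recursion is slow. Freezing $Q_t\equiv Q$, the fast recursion $\xi_{t+1}=\xi_t+\beta_t\bigl(f(Q_t;X_t)-\xi_t\bigr)$ has associated ODE $\dot\xi=f(Q)-\xi$, using $\E[f(Q;X_t)\mid\mathcal F_t]=f(Q)$ from \eqref{eq:F_Sampled_general} so that the sampling term is a martingale-difference perturbation, bounded once $Q$ is bounded; its unique globally asymptotically stable point is $\xi=f(Q)$. The standard two-timescale argument then gives $\xi_t-f(Q_t)\to 0$ a.s., hence $\hat\xi_t-g(Q_t)\to 0$ with $g(Q):=\mathrm{clip}\bigl(f(Q),-\|r\|_\infty-\epsilon,\|r\|_\infty+\epsilon\bigr)$, so that the slow iterate asymptotically tracks the asynchronous ODE
$$ \dot Q(s,a)=r(s,a)-g(Q)+\sum_{s'}p(s'\mid s,a)\,\max_{a'}Q(s',a')-Q(s,a). $$

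Next I would establish the two ingredients of the ODE method. \emph{(a) Fixed point.} Any equilibrium $q$ satisfies $q(s,a)=r(s,a)-g(q)+\sum_{s'}p(s'\mid s,a)\max_{a'}q(s',a')$, i.e., it solves the optimality equation \eqref{eq:average_reward_optimal_bellman_equation} with constant $\rho=g(q)$; by the uniqueness of $\rho$ in that equation, $g(q)=\rho^*$, and since $|\rho^*|=\bigl|\sum_{s,a}d_{\pi^*}(s,a)r(s,a)\bigr|\le\|r\|_\infty<\|r\|_\infty+\epsilon$, the clip is inactive there, so $f(q)=\rho^*$; this extra scalar constraint, together with the additivity $f(q+ce)=f(q)+c$, pins down $q=q^*$ uniquely among the solutions $q^*+ce$. \emph{(b) Stability and boundedness.} The term $g(Q)$ is a constant (over $(s,a)$) shift, hence invisible to the span seminorm $\mathrm{sp}(Q)=\max_{s,a}Q(s,a)-\min_{s,a}Q(s,a)$; therefore the relative component of $Q$ obeys exactly the span-contracting dynamics of ordinary RVI Q-learning (here we invoke ergodicity), which gives $\mathrm{sp}(Q_t)$ uniformly bounded and the relative component of $Q_t$ converging to that of $q^*$. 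Boundedness of the level is precisely where the clip is needed: evaluating the ODE at a maximizing coordinate $(s^*,a^*)$ gives $\tfrac{d}{dt}\max_{s,a}Q\le r(s^*,a^*)-g(Q)\le\|r\|_\infty-g(Q)$, which is $\le-\epsilon$ whenever $f(Q)>\|r\|_\infty+\epsilon$, and since a uniformly bounded span forces $f(Q)\ge\max_{s,a}Q(s,a)-\mathrm{sp}(Q)$, the maximal coordinate cannot escape to $+\infty$ (symmetrically at $-\infty$); $\xi_t$ then inherits boundedness from $f(Q_t)$.

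With (a) and (b) in place, the asynchronous two-timescale ODE theorem — in the form used for RVI Q-learning by \citet{Abounadi2001RVI} and \citet{Wan2020DifferentialQLearning} — yields $Q_t\to q^*$ and $\xi_t\to\rho^*$ almost surely. I expect the main obstacle to be exactly the effect of the clip: it breaks the scaling and translation homogeneity of $f$ on which the existing RVI convergence proofs rely globally, so those proofs cannot be quoted verbatim. The resolution is the decomposition above — the clip acts only on the scalar level of $Q$, leaving the span-contraction (and hence the hard part of global asymptotic stability) untouched, while the a priori bound $|\rho^*|\le\|r\|_\infty$ guarantees that the clip is inactive on a neighborhood of $q^*$, so the nonlinearity neither destroys boundedness nor perturbs the dynamics at the limit.
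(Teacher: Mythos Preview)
Your overall architecture is the paper's: two-timescale asynchronous SA, fast $\xi\to f(Q)$, slow $Q$ tracking $\dot Q=T(Q)-g_\eta(f(Q))e-Q$; the identification of the obstacle (the clip breaks the homogeneity of $f$) and of the unique fixed point (clip inactive at $q^*$ since $|\rho^*|\le\|r\|_\infty$) are both correct. Your span/level decomposition is in fact the paper's coupling $v_te:=w_t-Q_t$ with the unclipped RVI trajectory $w_t$ started from the same initial point: because the two ODEs differ only in the $e$-direction, $w_t-Q_t$ remains a multiple of $e$, so the relative component of $Q_t$ equals that of $w_t$ and hence converges. But ``relative component converges'' plus ``level bounded'' does not yet give $Q_t\to q^*$; you still owe the scalar step $v_t\to0$. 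The paper closes this by writing the scalar ODE $\dot v_t=-f(w_t)+g_\eta(f(w_t)-uv_t)$, waiting until $|f(w_t)-\rho^*|<\eta$, and then checking case by case that $V(v)=\tfrac12v^2$ is a strict Lyapunov function. Your level argument only shows the trajectory cannot escape, not that it settles.

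The more substantive gap is boundedness. Your ``$\tfrac{d}{dt}\max Q\le-\epsilon$ when $f(Q)$ is large'' bounds the \emph{ODE trajectory}; the two-timescale theorem you invoke needs a priori boundedness of the \emph{stochastic iterates} $Q_k$, which is a separate (and usually harder) statement and does not follow from ODE-level boundedness. The paper does not get this from the ODE at all: it adds a ``contractive MDP'' assumption (from \citet{GOSAVI2004654}) making $T$ a $\delta$-contraction in a weighted sup-norm; because the clip forces $|g_\eta(\xi_k)|\le\|r\|_\infty+\eta$ regardless of $\xi_k$, the slow update map satisfies $\|H_2(\xi,Q)\|_\gamma\le\delta\|Q\|_\gamma+D'$, and boundedness of $Q_k$ then follows from \citet{Tsitsiklis1994}. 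This weighted-contraction route---enabled precisely by the clip on the offset---is the main role the clip plays in the paper's proof and is missing from your sketch.
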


\subsection{Average Reward Soft Policy Improvement Theorem}
\label{sec:Average_Reward_Soft_Policy_Improvement_Theorem}

We propose a policy update method for the average reward criterion, inspired by the policy update method of the SAC\cite{Haarnoja2018SAC,Haarnoja2018SACv2}.

In the SAC, a soft-Q function defined for the discount rate $\gamma \in (0,1)$ and a policy $\pi$ \cite{Haarnoja2017SoftQ}:
\begin{equation}\label{eq:soft_Q_function}
  \begin{aligned}
     & Q^{\pi, \gamma}(s, a)  :=                                                                                         \\
     & \mathbb{E}_{\pi}{\left[\sum_{t=0}^{\infty} \gamma^t (R_t - \log \pi(A_{t+1}|S_{t+1})) | S_0 = s, A_0 = a\right]}.
  \end{aligned}
\end{equation}

The policy is then updated in the following manner for $\forall s \in \mathcal{S}$:
\begin{equation}\label{eq:soft_policy_update}
  \pi_{\text{new}}(\cdot|s) = \argmin_{\pi \in \Pi} D_{\text{KL}} \left( \pi(\cdot|s) \middle| \frac{\exp{(Q^{\pi_{\text{old}}, \gamma}(s, \cdot))}}{Z^{\pi_{\text{old}}}(s)} \right). \\
\end{equation}
The partition function $Z^{\pi_{\text{old}}}(s)$ normalizes the distribution and can be ignored in the gradient of the new policy (refer to \citet{Haarnoja2018SAC}).
$\Pi$ represents the set of policies, such as a parametrized family like Gaussian policies.
According to the soft policy improvement theorem \cite{Haarnoja2018SAC,Haarnoja2018SACv2}, for the updated policy $\pi_{\text{new}}$, the condition $Q^{\pi_{\text{new}}, \gamma}(s, a) \geq Q^{\pi_{\text{old}}, \gamma}(s, a), \forall (s, a) \in \mathcal{S} \times \mathcal{A}$ holds, indicating policy improvement.
The actor's update rule in the SAC is constructed based on this theorem.
Further, defining the entropy-augmented reward $R_t^{\text{ent}} := R_t - \mathbb{E}_{s' \sim p(\cdot|S_t, A_t), a' \sim \pi(\cdot|s')}{\left[\log \pi(a'|s')\right]}$, the Q function in Equation \ref{eq:soft_Q_function} can be reformulated as $Q^{\pi, \gamma}(s, a) := \mathbb{E}_{\pi}{\left[\sum_{t=0}^{\infty} \gamma^t R_t^{\text{ent}} | S_0 = s, A_0 = a\right]}$, allowing the application of the standard discounted Q-learning framework for the critic's update \cite{Haarnoja2018SAC,Haarnoja2018SACv2}.

In the context of average reward reinforcement learning, the soft average reward is defined as:
\begin{equation}
  \label{eq:soft_average_reward}
  \rho^{\pi}_{\text{soft}} = \lim_{T \rightarrow \infty} \frac{1}{T} \mathbb{E}_{\pi}{\left[\sum_{t=0}^{T} R_t - \log \pi(A_{t}|S_{t})\right]}.
\end{equation}
Correspondingly, the average reward soft-Q function is defined as:
\begin{equation}
  \label{eq:soft_average_reward_Q}
  \begin{aligned}
     & Q^{\pi}(s, a) :=                                                                                                                  \\
     & \mathbb{E}_{\pi}{\left[\sum_{t=0}^{\infty} R_t - \rho^{\pi}_{\text{soft}} - \log \pi(A_{t+1}|S_{t+1}) | S_0 = s, A_0 = a\right]}.
  \end{aligned}
\end{equation}

From Equation \ref{eq:soft_average_reward_Q}, the soft Q function represents the expected cumulative sum of rewards minus the average reward.
Thus, the relationship $Q^{\pi_{\text{new}}}(s, a) \geq Q^{\pi_{\text{old}}}(s, a), \forall (s, a) \in \mathcal{S} \times \mathcal{A}$ in the policy improvement theorem for the discounted SAC does not guarantee policy improvement.
We present a new average reward soft policy improvement theorem using the soft average reward $\rho^\pi_{\text{soft}}$ as a metric.
\begin{theorem}[Average Reward Soft Policy Improvement]
  \label{thm:Average_Reward_Soft_Policy_Improvement}
  Let $\pi_{\text{old}} \in \Pi$ and let $\pi_{\text{new}}$ be the optimizer of the minimization problem defined in Equation \ref{eq:soft_policy_update}.
  Then $\rho^{\pi_{\text{new}}} \geq \rho^{\pi_{\text{old}}}$ holds.
\end{theorem}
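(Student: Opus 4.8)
\emph{Proof plan.} The plan is to follow the template of the discounted soft policy improvement proof of \citet{Haarnoja2018SAC}, replacing the discounted value recursion with the average-reward soft Bellman equation and closing the argument with a stationarity (telescoping) identity; throughout I prove the statement for the soft average reward of Equation \ref{eq:soft_average_reward}, i.e. $\rho^{\pi_{\text{new}}}_{\text{soft}} \ge \rho^{\pi_{\text{old}}}_{\text{soft}}$, which is the metric the update targets. First I would record the soft Bellman equation for the average-reward soft-Q function of Equation \ref{eq:soft_average_reward_Q}: setting $V^{\pi}(s) := \mathbb{E}_{a \sim \pi(\cdot|s)}[Q^{\pi}(s,a) - \log \pi(a|s)]$ and peeling off the first term of the defining series gives
$$Q^{\pi}(s,a) = r(s,a) - \rho^{\pi}_{\text{soft}} + \mathbb{E}_{s' \sim p(\cdot|s,a)}[V^{\pi}(s')],$$
while Assumption \ref{assumption:ergodic} and Equation \ref{eq:average_reward} give the stationary representation $\rho^{\pi}_{\text{soft}} = \mathbb{E}_{s \sim d_{\pi},\, a \sim \pi(\cdot|s)}[r(s,a) - \log \pi(a|s)]$.

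Next I would extract the pointwise consequence of the KL minimization in Equation \ref{eq:soft_policy_update}. Because $\pi_{\text{old}} \in \Pi$ is feasible, optimality of $\pi_{\text{new}}$ gives $D_{\text{KL}}(\pi_{\text{new}}(\cdot|s) \,\|\, \cdot) \le D_{\text{KL}}(\pi_{\text{old}}(\cdot|s) \,\|\, \cdot)$ for every $s$; expanding both KL divergences against $\exp(Q^{\pi_{\text{old}}}(s,\cdot))/Z^{\pi_{\text{old}}}(s)$ and cancelling the action-independent $\log Z^{\pi_{\text{old}}}(s)$ term yields, for all $s \in \mathcal{S}$,
$$\mathbb{E}_{a \sim \pi_{\text{new}}(\cdot|s)}[Q^{\pi_{\text{old}}}(s,a) - \log \pi_{\text{new}}(a|s)] \ge V^{\pi_{\text{old}}}(s).$$

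I would then combine the two ingredients. Starting from $\rho^{\pi_{\text{new}}}_{\text{soft}} = \mathbb{E}_{s \sim d_{\pi_{\text{new}}},\, a \sim \pi_{\text{new}}}[r(s,a) - \log \pi_{\text{new}}(a|s)]$, substitute $r(s,a) = Q^{\pi_{\text{old}}}(s,a) + \rho^{\pi_{\text{old}}}_{\text{soft}} - \mathbb{E}_{s' \sim p(\cdot|s,a)}[V^{\pi_{\text{old}}}(s')]$ from the soft Bellman equation for $\pi_{\text{old}}$, bound the $Q^{\pi_{\text{old}}} - \log \pi_{\text{new}}$ contribution from below by $\mathbb{E}_{s \sim d_{\pi_{\text{new}}}}[V^{\pi_{\text{old}}}(s)]$ using the pointwise KL inequality, and invoke stationarity of $d_{\pi_{\text{new}}}$ under its own kernel, $\mathbb{E}_{s \sim d_{\pi_{\text{new}}},\, a \sim \pi_{\text{new}},\, s' \sim p(\cdot|s,a)}[V^{\pi_{\text{old}}}(s')] = \mathbb{E}_{s \sim d_{\pi_{\text{new}}}}[V^{\pi_{\text{old}}}(s)]$; the two $V^{\pi_{\text{old}}}$ expectations cancel and leave exactly $\rho^{\pi_{\text{new}}}_{\text{soft}} \ge \rho^{\pi_{\text{old}}}_{\text{soft}}$.

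I expect the main obstacle to be the analytic groundwork that the discounted proof gets for free: showing that the non-discounted series defining $Q^{\pi}$ in Equation \ref{eq:soft_average_reward_Q} converges and satisfies the stated soft Bellman equation (this requires the geometric mixing furnished by ergodicity so the centered increments $R_t - \rho^{\pi}_{\text{soft}} - \log \pi(A_{t+1}|S_{t+1})$ are summable in expectation, and finiteness of the differential-entropy terms so that $V^{\pi}$ is well defined), and justifying the telescoping step, which needs $V^{\pi_{\text{old}}}$ to be $d_{\pi_{\text{new}}}$-integrable and the expectation to commute with the one-step transition. Everything else is the same bookkeeping as in \citet{Haarnoja2018SAC}.
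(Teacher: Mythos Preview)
Your proposal is correct and follows essentially the same approach as the paper: both arguments combine the average-reward soft Bellman equation for $\pi_{\text{old}}$, the pointwise inequality $\mathbb{E}_{a\sim\pi_{\text{new}}}[Q^{\pi_{\text{old}}}(s,a)-\log\pi_{\text{new}}(a|s)]\ge V^{\pi_{\text{old}}}(s)$ obtained from the KL minimization, and the stationarity of $d_{\pi_{\text{new}}}$ to cancel the $V^{\pi_{\text{old}}}$ terms. The only cosmetic difference is that the paper starts from $\rho^{\pi_{\text{old}}}_{\text{soft}}$ and bounds it above, whereas you start from $\rho^{\pi_{\text{new}}}_{\text{soft}}$ and bound it below; and, like you, the paper actually proves the soft inequality $\rho^{\pi_{\text{new}}}_{\text{soft}}\ge\rho^{\pi_{\text{old}}}_{\text{soft}}$ rather than the unentropied one in the theorem statement.
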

\begin{proof}
  See Appendix \ref{sec:Soft_Policy_Improvement}.
\end{proof}
This result demonstrates that updating the policy in the same manner as SAC leads to improvements in the policy under the average reward criterion.
Additionally, defining the entropy-augmented reward $R_t^{\text{ent}}$ and the entropy-augmented Q function $Q^{\pi, \text{ent}}(s, a)$ as
\begin{eqnarray}
  R_t^{\text{ent}} &:=& R_t - \log \pi(A_t|S_t),\nonumber\\
  Q^{\pi, \text{ent}}(s, a) &:=& Q^{\pi}(s, a) - \log \pi(A_t|S_t). \label{eq:soft_average_reward_Q_ent}
\end{eqnarray}
allows the Q function in Equation \ref{eq:soft_average_reward_Q} to be reformulated as $Q^{\pi, \text{ent}}(s, a) := \mathbb{E}_{\pi}{\left[\sum_{t=0}^{\infty} R_t^{\text{ent}} - \rho^{\pi}_{\text{soft}} | S_0 = s, A_0 = a\right]}$.
This formulation aligns with the definition of the Q function in average reward reinforcement learning (Equation \ref{eq:average_reward_Q}), enabling the application of the average reward Q-learning framework.

\subsection{Automatic Reset Cost Adjustment}
\label{sec:Automatic_Reset_Cost_Adjustment}

In this section, we address the challenge associated with applying the average reward criterion to tasks that are not purely continuing tasks, such as locomotion tasks where episodes may end due to falls.
Average reward reinforcement learning assumes continuing tasks that do not have an episode termination.
This is because average rewards are defined over the infinite horizon, and after the end of an episode, the agent continues to receive a reward of zero, leading to an average reward of zero.
However, in many tasks, such as locomotion tasks, episodes may end due to events like robot falls, depending on the policy.
In these cases, the tasks are not purely continuing.

To apply average reward reinforcement learning to such tasks, we employ the environment reset and the Reset Cost which ATRPO \cite{Yiming2021ATRPO} does.
The environment reset regards a terminated episode as a continuing one by initializing the environment.
Reset Cost is the penalty reward given for resetting the environment, denoted as $-r_{\text{cost}}$ (where $r_{\text{cost}} > 0$).
This means that, even after an episode ends in a certain terminal state $S_t$, initializing the environment, and observing the initial state $S_0$, the experience $(S_{t-1}, A_{t-1}, r(S_{t-1}, A_{t-1}) - r_{\text{cost}}, S_0)$ is obtained, and the episode is treated as continued.

In ATRPO, for experiments in the Mujoco environment, the Reset Cost $r_{\text{cost}}$ is fixed at 100, but the optimal Reset Cost is generally non-trivial.
Instead of setting the $r_{\text{cost}}$, we propose a method to control the frequency at which the agent reaches termination states.
Let's consider a new MDP from MDPs with termination, where we only introduce environment resets without adding the Reset Cost (equivalent to the environment when $r_{\text{cost}}=0$).
Let $\mathcal{S}_{\text{term}}$ be the set of termination states in the original MDP, and define the frequency $\rho_{\text{reset}}^\pi$ at which the agent reaches termination states under the policy $\pi$ as follows:
$$
  \rho_{\text{reset}}^\pi = \lim_{T \rightarrow \infty} \frac{1}{T} \mathbb{E}_{\pi}{ \left[ \sum_{t=0}^{T} \underE{s' \sim p(\cdot | S_t, A_t )}{ \mathbbm{1}\left(s' \in \mathcal{S}_{\text{term}}\right) }  \right] }.
$$
Using $\rho_{\text{reset}}^\pi$, we consider the following constrained optimization problem:
\begin{equation}
  \label{eq:reset_cost_optimization}
  \begin{aligned}
     & \max_{\pi} \rho^\pi,                                               \\
     & \text{s.t. } \rho_{\text{reset}}^\pi \leq \epsilon_{\text{reset}}.
  \end{aligned}
\end{equation}
This problem aims to maximize the average reward $\rho^\pi$ with a constraint on the frequency of reaching termination states, where the termination frequency target $\epsilon_{\text{reset}} \in (0, 1)$ is a user parameter.
Note that $\rho^\pi$ here refers to the average reward when the Reset Cost is set to zero.

To solve this constrained optimization problem, we define the Lagrangian for the dual variable $\lambda$ as follows:
$$
  \mathcal{L}(\pi, \lambda) = \rho^\pi - \lambda \rho_{\text{reset}}^\pi - \lambda \epsilon_{\text{reset}}.
$$
Following prior research in constrained optimization problems, the primal problem is formulated as:
$$
  \max_{\pi} \min_{\lambda \geq 0} \mathcal{L}(\pi, \lambda).
$$
In our approach to solving this problem, similar to the adjustment of the temperature parameter in Maximum Entropy Reinforcement Learning \cite{Haarnoja2018SACv2,Abbas2018MPO}, we alternate between outer and inner optimization steps.
The outer optimization step is updating $\pi$ to maximize $\rho^\pi - \lambda\rho_{\text{reset}}^\pi$ for a fixed $\lambda$.
Since $\rho^\pi - \lambda\rho_{\text{reset}}^\pi$ is equal to the average reward when $r_{\text{cost}}=\lambda$, this optimization step is equivalent to the policy update step in average reward reinforcement learning with Reset Cost.
The inner optimization step is updating $\lambda$ to minimize $- \lambda \rho_{\text{reset}}^\pi - \lambda \epsilon_{\text{reset}}$.
To compute this objective, it is necessary to obtain $\rho_{\text{reset}}^\pi$.
Hence, we estimate the value of $\rho_{\text{reset}}^\pi$ by updating the Q function $Q_{\text{reset}}$ under the setting $r(s,a) = \underE{s' \sim p(\cdot | s, a )}{ \mathbbm{1}\left(s' \in \mathcal{S}_{\text{term}}\right) } $ using the update method described in Section \ref{sec:RVI_Q_learning_based_Q_Network_update}.

\section{Experiment}

In our benchmark experiments, we aim to verify two aspects:
(1) A comparison of the performance between RVI-SAC, SAC\cite{Haarnoja2018SACv2} with various discount rates, and the existing off-policy average reward DRL method, ARO-DDPG \cite{Saxena2023ARODDPG}.
(2) How does each component in our proposed method contribute to performance?

To demonstrate these, we conducted benchmark experiments using six tasks (Ant, HalfCheetah, Hopper, Walker2d, Humanoid, and Swimmer) implemented in the Gymnasium \cite{towers_gymnasium_2023} and MuJoCo physical simulator \cite{todorov2012mujoco}.
Note that there is no termination in the Swimmer and HalfCheetah environments, meaning that resets do not occur.

The source code for this experiment can be found on our GitHub repository at \url{https://github.com/yhisaki/average-reward-drl}.

\subsection{Comparative evaluation}
\label{sec:comparative_evaluation}

\begin{figure*}[t]
    \vspace{-0.2cm}
    \begin{subfigure}{1.0\textwidth}
        \centering
        \includegraphics[scale=0.5]{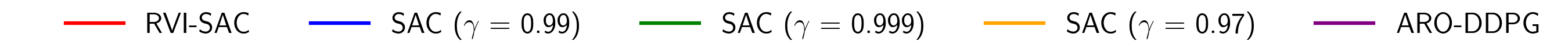}
    \end{subfigure}
    \centering
    \begin{subfigure}{.3\textwidth}
        \centering
        \includegraphics[width=\linewidth]{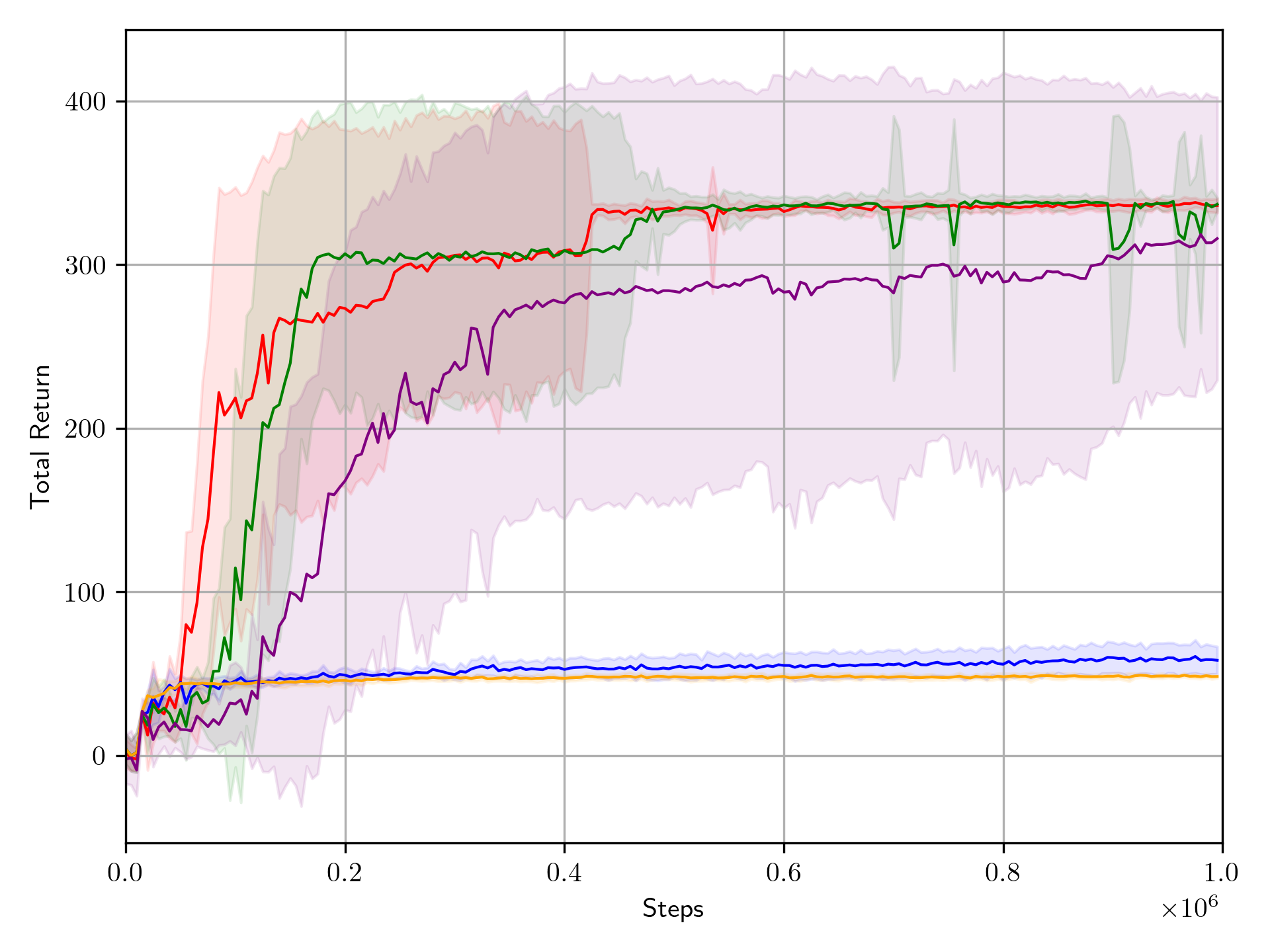}
        \caption{Swimmer}
        \label{fig:main_result_swimmer}
    \end{subfigure}
    \begin{subfigure}{.3\textwidth}
        \centering
        \includegraphics[width=\linewidth]{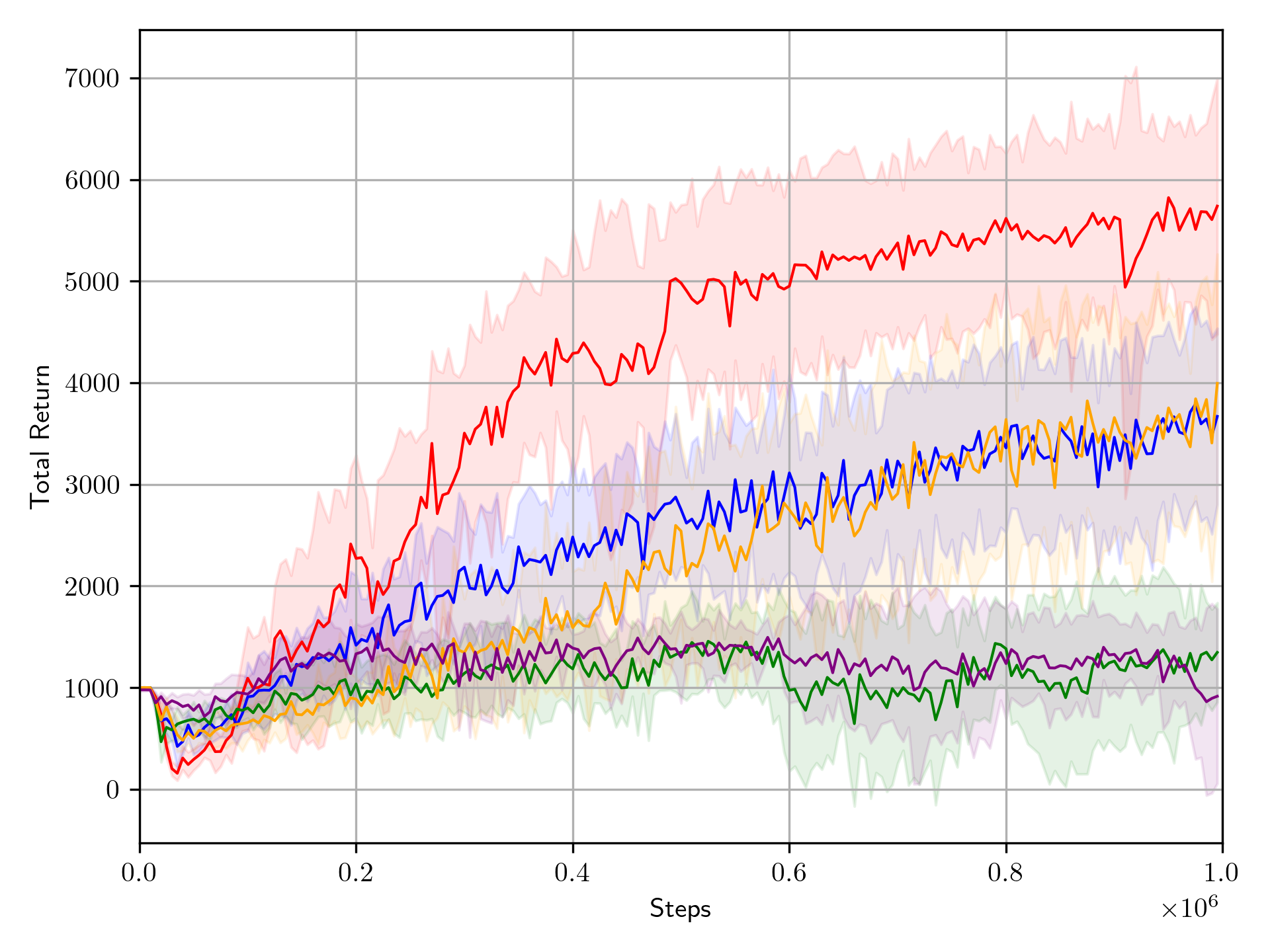}
        \caption{Ant}
        \label{fig:main_result_ant}
    \end{subfigure}
    \begin{subfigure}{.3\textwidth}
        \centering
        \includegraphics[width=\linewidth]{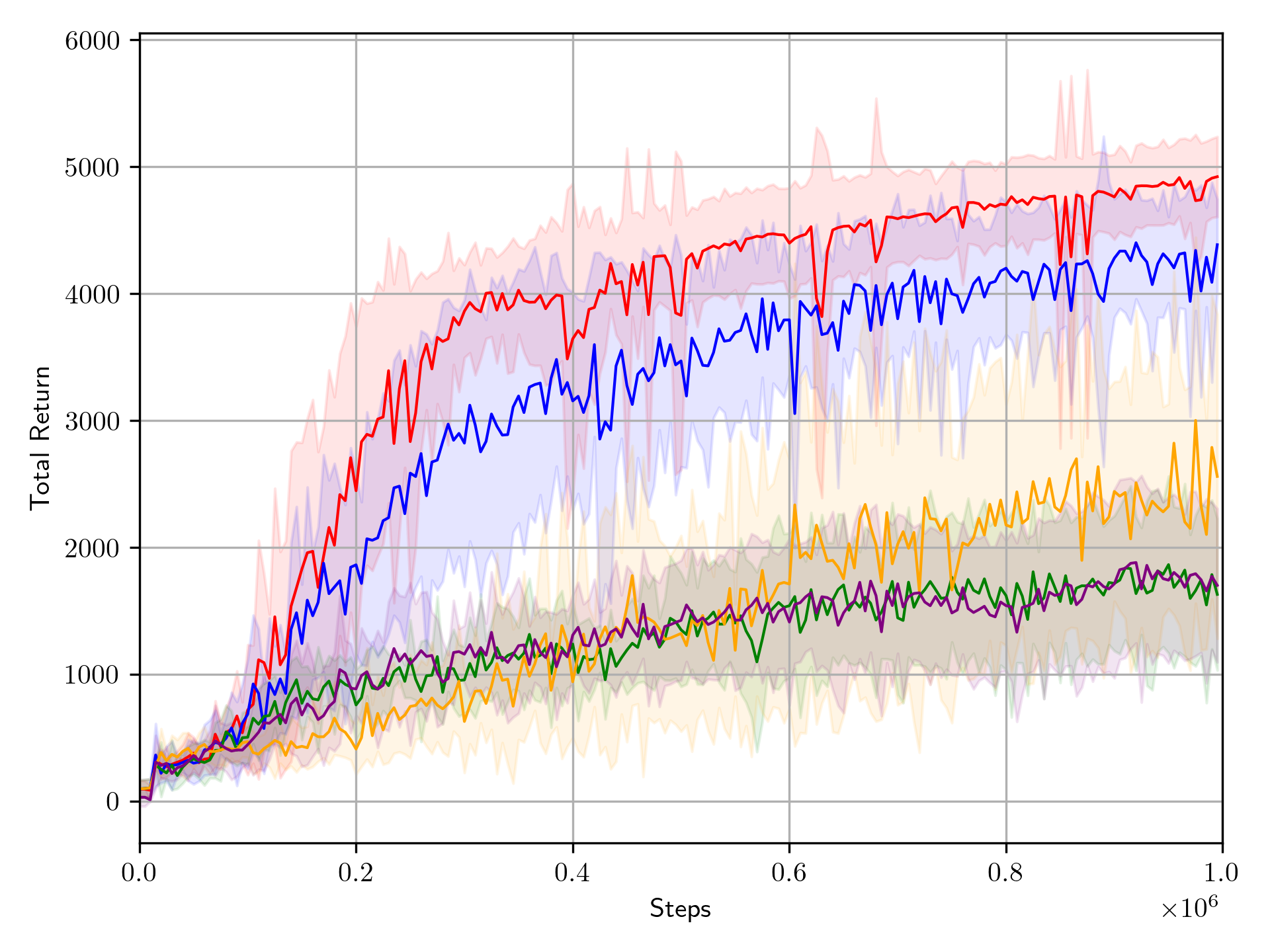}
        \caption{Walker2d}
        \label{fig:main_result_walker2d}
    \end{subfigure}%

    \begin{subfigure}{.3\textwidth}
        \centering
        \includegraphics[width=\linewidth]{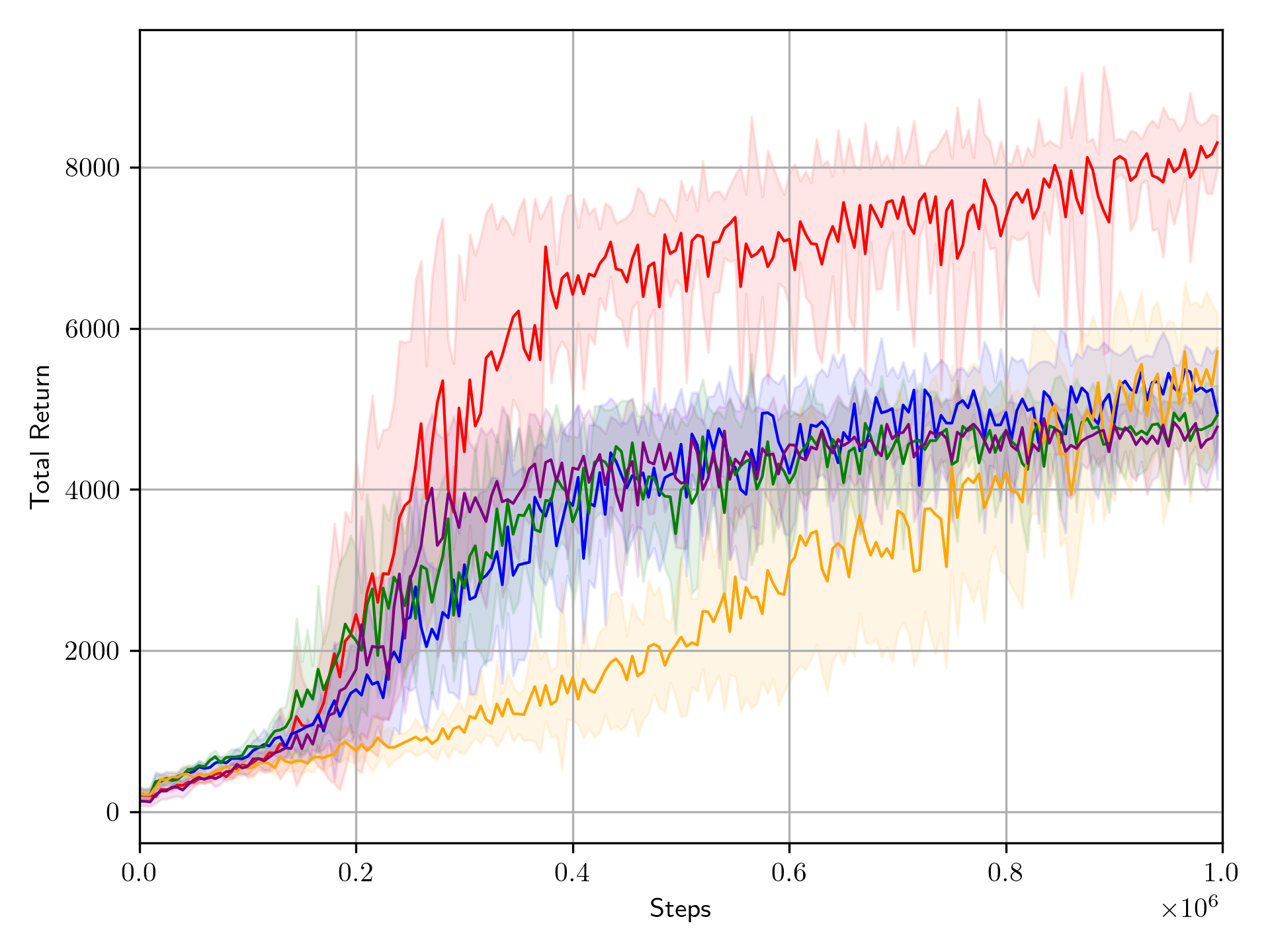}
        \caption{Humanoid}
        \label{fig:main_result_humanoid}
    \end{subfigure}
    \begin{subfigure}{.3\textwidth}
        \centering
        \includegraphics[width=\linewidth]{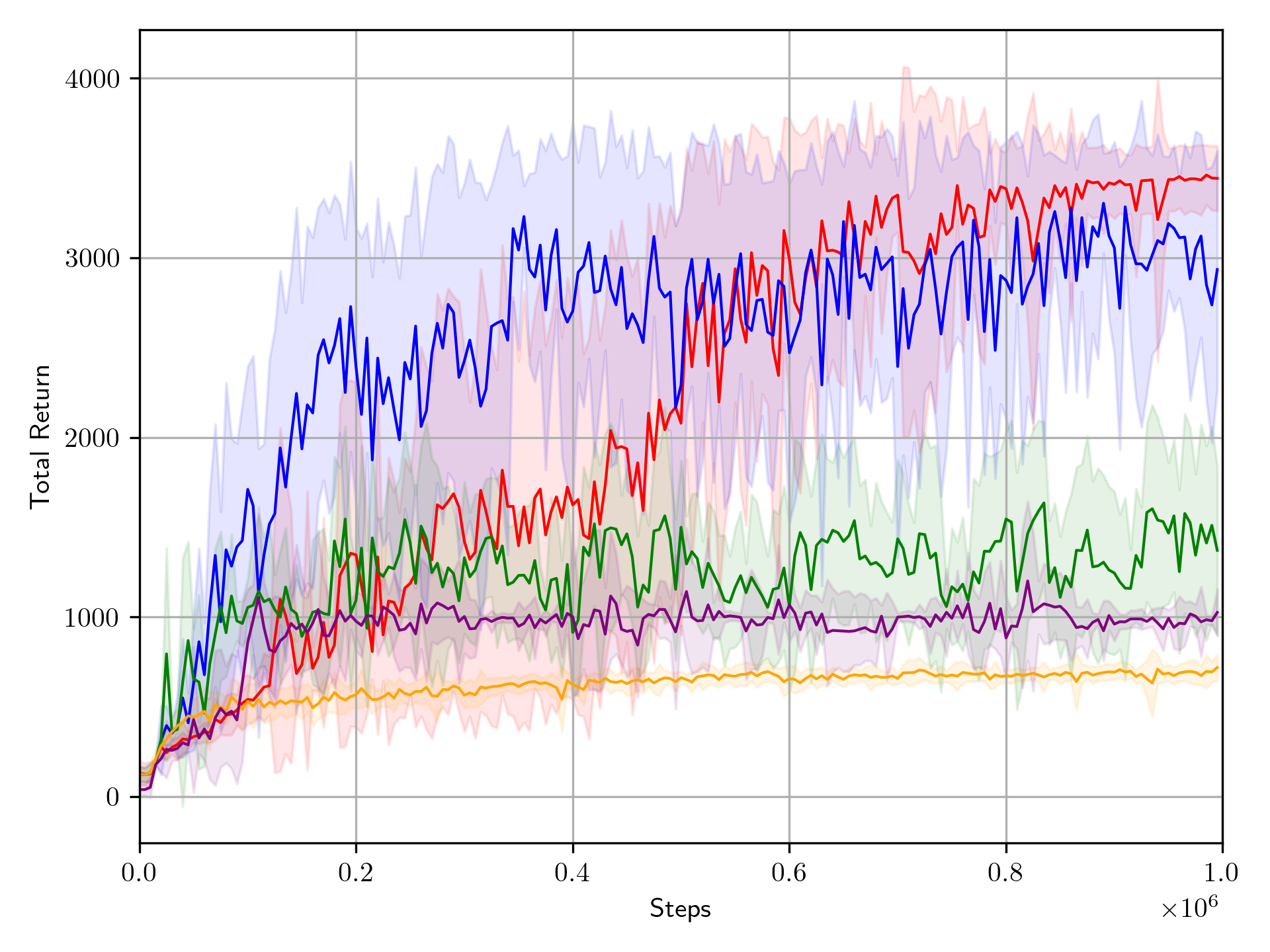}
        \caption{Hopper}
        \label{fig:main_result_hopper}
    \end{subfigure}
    \begin{subfigure}{.3\textwidth}
        \centering
        \includegraphics[width=\linewidth]{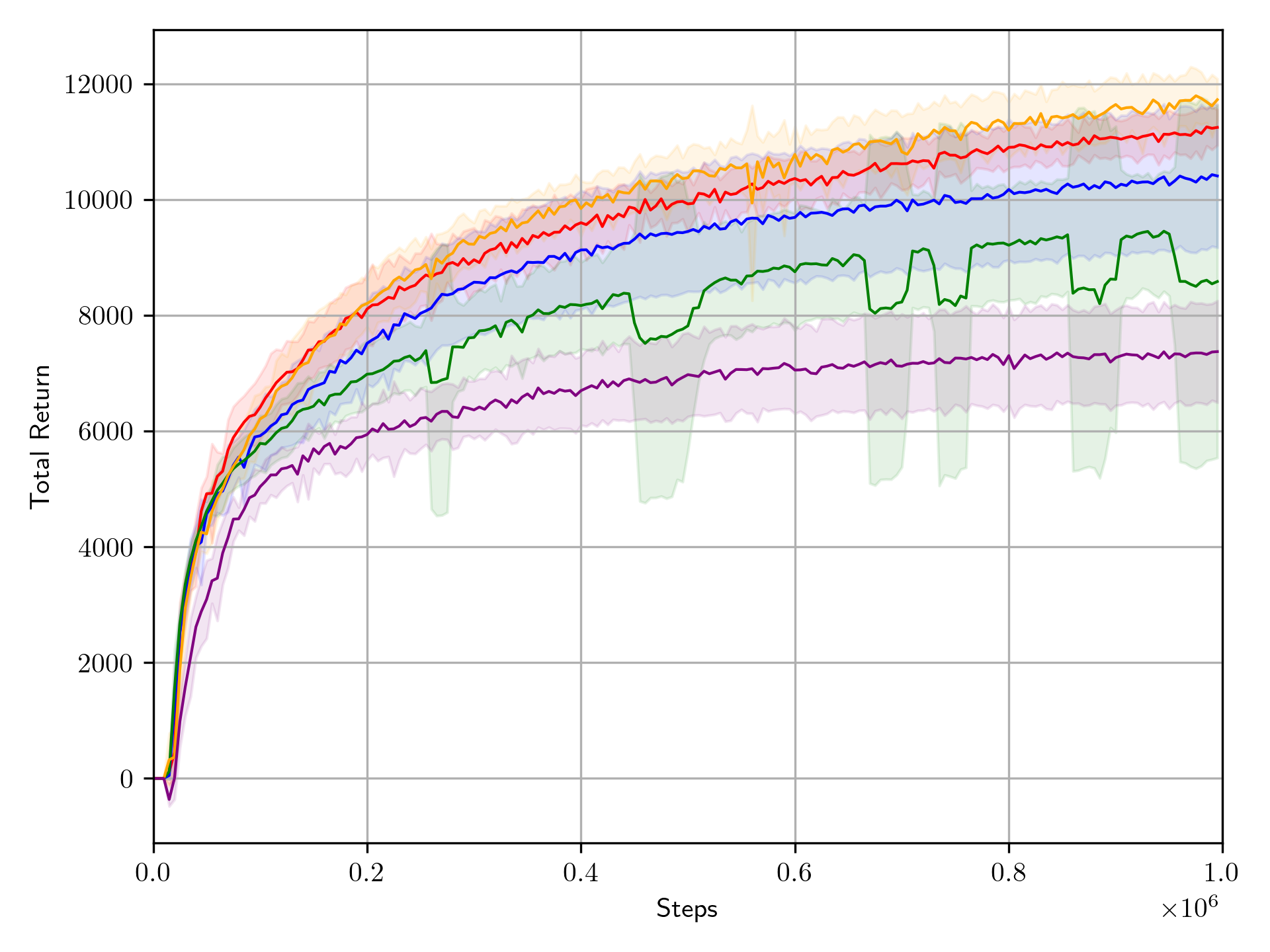}
        \caption{HalfCheetah}
        \label{fig:main_result_halfcheetah}
    \end{subfigure}
    \vspace{-0.2cm}
    \caption{Learning curves for the Gymnasium's Mujoco tasks. The horizontal axis represents Steps, and the vertical axis represents the evaluation value (total\_return).
        Lines and shades represent the mean and standard deviation of the evaluation values over 10 trials, respectively.}
    \label{fig:main_result}
    \vspace{-0.5cm}
\end{figure*}

We conducted experiments with 10 different random seed trials for each algorithm, sampling evaluation scores every 5,000 steps.
For RVI-SAC and SAC, stochastic policies are treated as deterministic during evaluation.
The maximum episode step is limited to 1,000 during training and evaluation.
Figure \ref{fig:main_result} shows the learning curves of RVI-SAC, SAC with various discount rates, and ARO-DDPG.
These experiments set the evaluation score as the total return over 1,000 steps.
Results of experiments with the evaluation score set as an average reward (total\_return / survival\_step) are presented in Appendix \ref{sec:appendix_average_reward_evaluation}.

From the results shown in Figure \ref{fig:main_result}, when comparing RVI-SAC with SAC with various discount rates, RVI-SAC demonstrates equal or better performance than SAC with the best discount rate in all environments except HalfCheetah.
A notable observation is from the Swimmer environment experiments (Figure \ref{fig:main_result_swimmer}).
SAC's recommended discount rate of $\gamma = 0.99$ \cite{Haarnoja2018SAC,Haarnoja2018SACv2} performs better than the other rates in environments other than Swimmer.
However, a larger discount rate of $\gamma = 0.999$ is required in the Swimmer environment.
However, setting a large discount rate can lead to destabilization of learning and slow convergence \cite{Fujimoto2018TD3,Dewanto2021ExaminingAR}, and indeed, in the environments other than Swimmer, a setting of $\gamma = 0.999$ shows lower performance.
Compared to SAC, RVI-SAC shows the same performance as SAC ($\gamma = 0.999$) in the Swimmer environment and equal or better than SAC ($\gamma = 0.99$) in the other environments.
This result suggests that while traditional SAC using a discount rate may be significantly impacted by the choice of discount rate, RVI-SAC using the average reward resolves this issue.

When comparing RVI-SAC with ARO-DDPG, RVI-SAC shows higher performance in all environments.
SAC has improved performance over methods using deterministic policies by introducing the concept of Maximum Entropy Reinforcement Learning.
Similarly, it can be considered that the introduction of this concept to RVI-SAC is the primary reason for RVI-SAC's superior performance over ARO-DDPG.

\subsection{Design evaluation}
\label{sec:design_evaluation}

\begin{figure*}[t]
    \centering
    \begin{subfigure}{.32\textwidth}
        \centering
        \includegraphics[width=\linewidth]{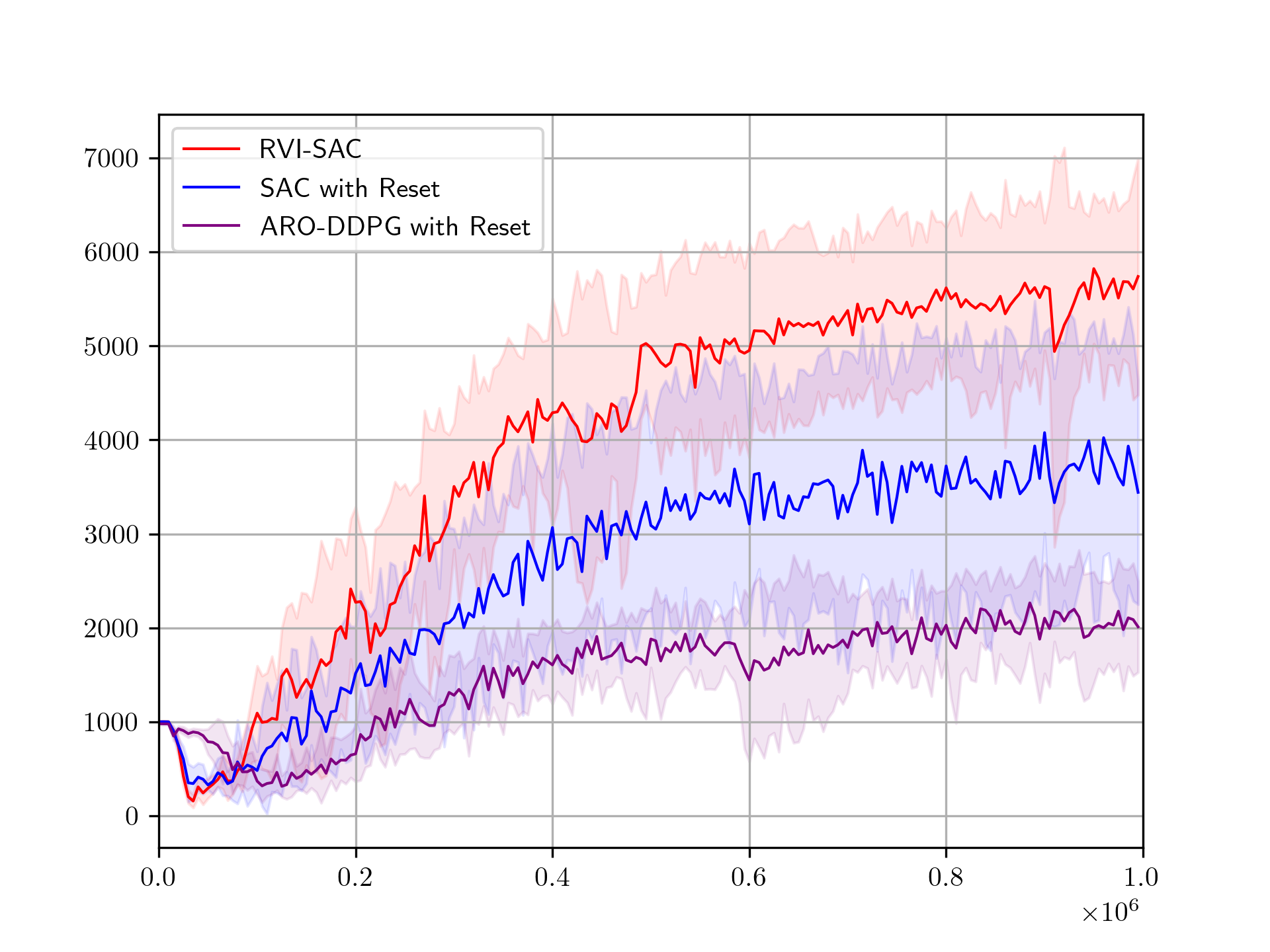}
        \caption{Performance Comparison of RVI-SAC, SAC with automatic Reset Cost adjustment, and ARO-DDPG with automatic Reset Cost adjustment}
        \label{fig:sac_with_reset}
    \end{subfigure}
    \begin{subfigure}{.32\textwidth}
        \centering
        \includegraphics[width=\linewidth]{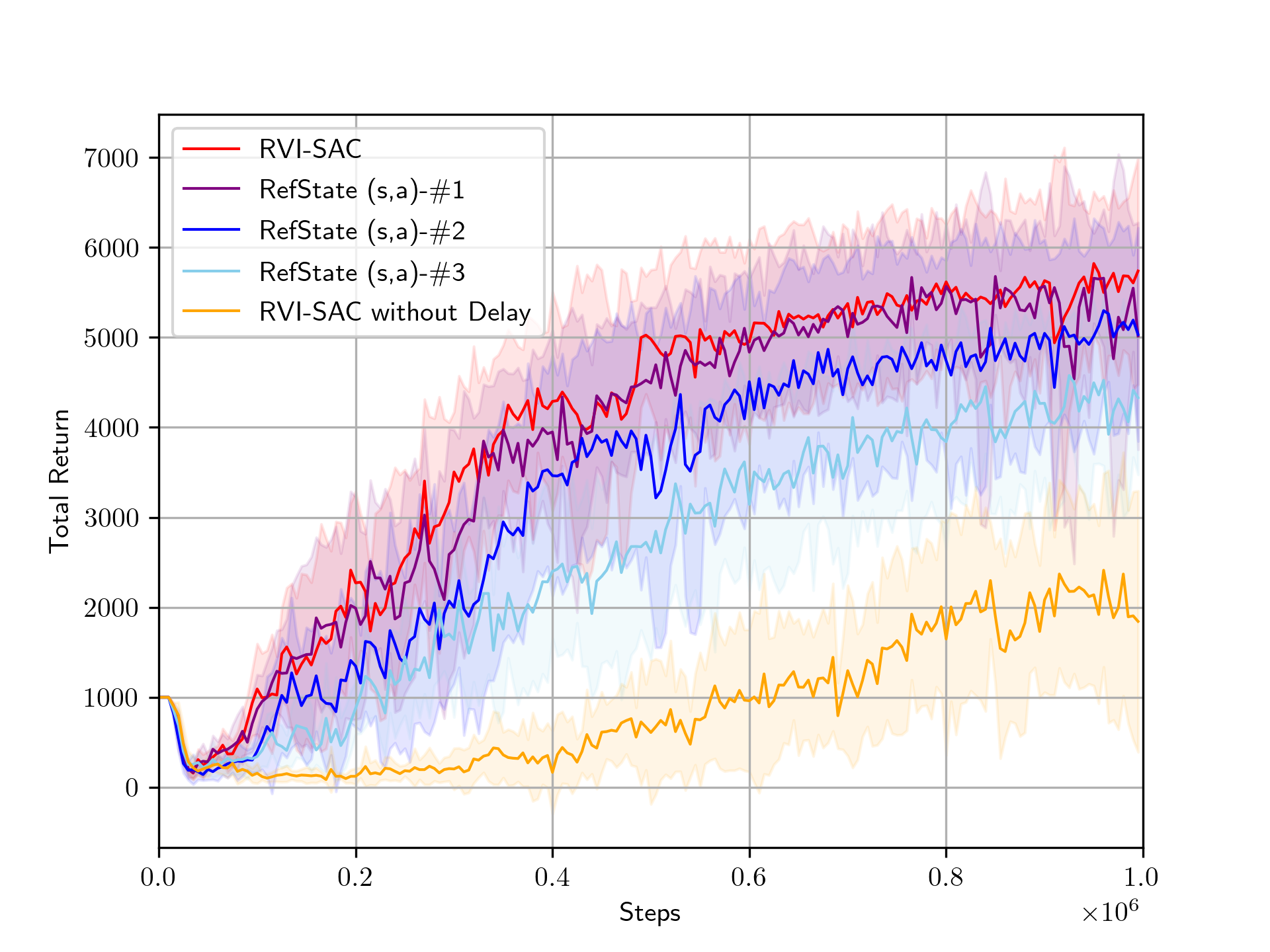}
        \caption{Ablation Study of Delayed f(Q) Update\newline \newline \newline }
        \label{fig:rvi_ablation}
    \end{subfigure}
    \begin{subfigure}{.32\textwidth}
        \centering
        \includegraphics[width=\linewidth]{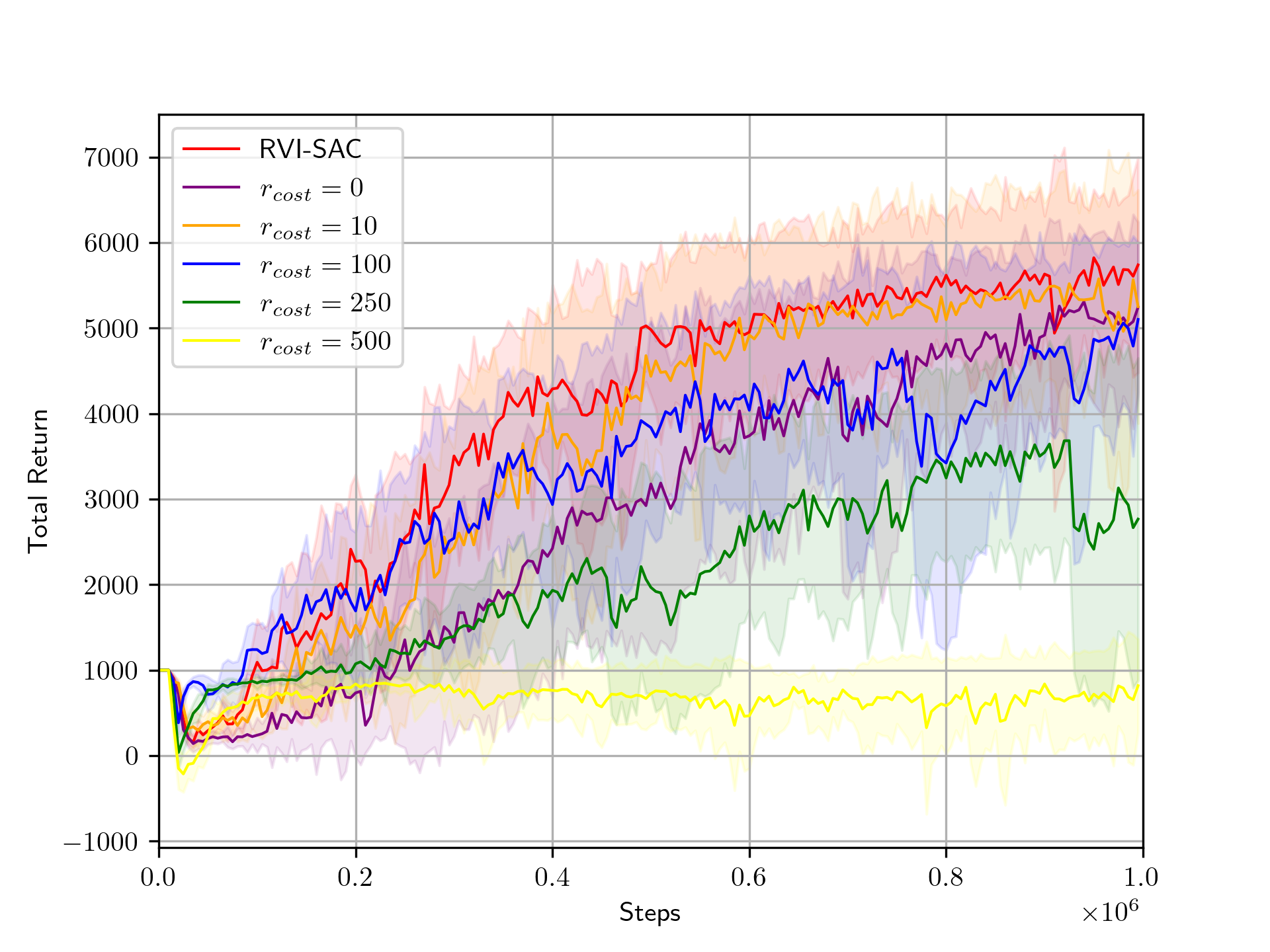}
        \caption{Performance Comparison of RVI-SAC and RVI-SAC with Fixed Reset Cost ($r_{\text{cost}}=0, 10, 100, 250, 500$)\newline }
        \label{fig:fixed_reset}
    \end{subfigure}
    \vspace{-0.2cm}
    \caption{
        Experimental results demonstrating the effectiveness of each component of RVI-SAC.
        All three graphs represent learning curves on the Ant environment.
    }
    \vspace{-0.5cm}
\end{figure*}

In the previous section, we demonstrated that RVI-SAC overall exhibits better performance compared to SAC using various discount rates and ARO-DDPG.
In this section, we show how each component of RVI-SAC contributes to the overall performance.

\textbf{Performance Comparison of RVI-SAC, SAC with automatic Reset Cost adjustment and ARO-DDPG with automatic Reset Cost adjustment}

Since RVI-SAC introduces the automatic Reset Cost adjustment, RVI-SAC uses a different reward structures from that used in SAC and ARO-DDPG in which the reward is set to zero after termination.
To compare the performance of RVI-SAC, SAC and ARO-DDPG under the same reward structure, we conduct comparative experiments with \textbf{RVI-SAC}, SAC with the automatic Reset Cost adjustment (\textbf{SAC with Reset}) and ARO-DDPG with the automatic Reset Cost adjustment(\textbf{ARO-DDPG with Reset}).
Figure \ref{fig:sac_with_reset} shows the learning curves of these experiments in the Ant environment.
(Results for other environments are shown in Appendix \ref{sec:appendix_sac_with_reset}).
Here, the discount rate of SAC is set to $\gamma = 0.99$.

Figure \ref{fig:sac_with_reset} demonstrates that RVI-SAC outperforms SAC with automatic Reset Cost adjustment and ARO-DDPG with automatic Reset Cost adjustment.
This result suggests that the improved performance of RVI-SAC is not solely due to the different reward structure but also due to the effects of using the average reward criterion.

\textbf{Ablation Study of Delayed f(Q) Update}

In this section, we evaluate the effectiveness of the Delayed f(Q) Update described in Section \ref{sec:RVI_Q_learning_based_Q_Network_update}.
This method stabilizes learning without depending on a specific state/action pair for updating the Q function.
To validate the effectiveness of this method, we examine whether the followings are correct:
(1) When the function $f(Q)$ is $f(Q) = Q(S,A)$ for state/action pair $(S,A) \in \mathcal{S} \times \mathcal{A}$, the performance depends on the choice of $(S,A)$.
(2) When the Q function is updated using Equation \ref{eq:F_Sampled} directory as $f(Q)$, learning becomes unstable.
To examine these, we conducted performance comparisons with three algorithms:
(1) \textbf{RVI-SAC},
(2) \textbf{RefState {\boldmath $(s,a)-\#1, (s,a)-\#2$ and $(s,a)-\#3$}} that are RVI-SACs updating the Q functions with $f(Q) = Q(s,a)$, using sampled state/action pairs, $(s,a)-\#1, (s,a)-\#2$ and $(s,a)-\#3$, as Reference States obtained from when agent takes random actions, respectively,
(3) \textbf{RVI-SAC without Delay} that is RVI-SAC updating the Q function using Equation \ref{eq:F_Sampled} directory as $f(Q)$.

Figure \ref{fig:rvi_ablation} shows the learning curves for these methods in the Ant environment.
Firstly, comparing RVI-SAC with RefState $(s,a)-\#1, (s,a)-\#2$ and $(s,a)-\#3$, except for RefState $(s,a)-\#1$, the methods using Reference States show lower performance than RVI-SAC.
Furthermore, comparing the results of RefState $(s,a)-\#1, (s,a)-\#2$ and $(s,a)-\#3$, it suggests that performance depends on the choice of Reference State.
These results suggest the effectiveness of RVI-SAC, which shows good performance without depending on a specific state/action pair.
Next, comparing RVI-SAC with RVI-SAC without Delay, which directly uses Equation \ref{eq:F_Sampled}, it is observed that RVI-SAC performs significantly better.
This result suggests that, in RVI-SAC, implementing the Delayed f(Q) Update contributes to stabilizing the learning process, thereby achieving higher performance.
It indicates the effectiveness of the Delayed f(Q) Update, aiming at stabilizing updates of the Q-function.

\textbf{RVI-SAC with Fixed Reset Cost}

To demonstrate the effectiveness of the Automatic Reset Cost Adjustment, we compare the performance of RVI-SAC and RVI-SAC with fixed Reset Costs in the Ant environment.
Figure \ref{fig:fixed_reset} shows the learning curves of \textbf{RVI-SAC} and \textbf{RVI-SAC with fixed Reset Costs, {\boldmath $r_{\text{cost}}=0, 10, 100, 250, 500$}.}
These results show that settings other than the optimal fixed Reset Cost of $r_{\text{cost}} = 10$ for this environment decrease performance.
Moreover, the performance of RVI-SAC with fixed Reset Costs is highly dependent on its setting.
This result suggests the effectiveness of the automatic adjustment of Reset Cost, which does not require specific settings.

\section{Related Works}

The importance of using the average reward criterion in continuing tasks has been suggested.
\citet{Blackwell1962DiscreteDP} showed that a Blackwell optimal policy $\pi^*$ maximizes the average reward exists, and that, for any discount reward criterion satisfying $\gamma \geq \gamma^*$, the optimal policy coincides with the Blackwell optimal policy.
However, \citet{Dewanto2021ExaminingAR} demonstrated that setting a high discount rate to satisfy $\gamma \geq \gamma^*$ can slow down convergence, and a lower discount rate may lead to a sub-optimal policy.
Additionally, they noted various benefits of directly applying the average reward criterion to recurrent MDPs.
\citet{Naik2019DRLOptimization} pointed out that discounted reinforcement learning with function approximation is not an optimization problem, and the optimal policy is not well-defined.

Although there are fewer studies on tabular Q-learning methods and theoretical analyses for the average reward criterion compared to those for the discounted reward criterion, several notable works exist \cite{Schwartz1993RLearning, Singh1999RLearning, Abounadi2001RVI, Wan2020DifferentialQLearning, Yang2016CSVQLearning} .
RVI Q-learning, which forms the foundational idea of our proposed method, was proposed by \citet{Abounadi2001RVI} and generalized by \citet{Wan2020DifferentialQLearning} with respect to the function $f$.
Differential Q-learning \cite{Wan2020DifferentialQLearning} is a special case of RVI Q-learning.
The asymptotic convergence of these methods in weakly communicating MDPs has been established by \citet{Wan2022OConvergenceWeaklyMDPs}.

Several methods focusing on function approximation for average reward criterion have been proposed \cite{Saxena2023ARODDPG,Chaudhuri2019POLITEX,Yiming2021ATRPO,Xiaoteng2021APO,Zhang2021AROPE,Zhang2021BreakingDT}.
A notable study by \citet{Saxena2023ARODDPG} extended DDPG to the average reward criterion and demonstrated performance using dm-control's Mujoco tasks.
This work is deeply relevant to our proposed method.
This research provides asymptotic convergence and finite time analysis in a linear function approximation.
Another contribution is POLITEX \cite{Chaudhuri2019POLITEX}, which updates a policy using a Boltzmann distribution over the sum of action-value estimates from a prior policy.
POLITEX demonstrated performance using an Atari's Ms Pacman.
ATRPO \cite{Yiming2021ATRPO} and APO \cite{Xiaoteng2021APO} update a policy based on policy improvement bounds to the average reward setting within an on-policy framework.
ATRPO and APO demonstrated performance using OpenAI Gym's Mujoco tasks.

\section{Conclusion}


In this paper, we proposed RVI-SAC, a novel off-policy DRL method with the average reward criterion.
RVI-SAC is composed of three components. The first component is the Critic update based on RVI Q-learning.
We did not simply extend RVI Q-learning to the Neural Network method, but introduced a new technique called Delayed f(Q) Update, enabling stable learning without dependence on a Reference State.
Additionally, we proved the asymptotic convergence of this method.
The second component is the Actor update using the Average Reward Soft Policy Improvement Theorem.
The third component is the automatic adjustment of the Reset Cost to apply average reward reinforcement learning to locomotion tasks with termination.
We applied RVI-SAC to the Gymnasium's Mujoco tasks, and demonstrated that RVI-SAC showed competitive performance compared to existing methods.

For future work, on the theoretical side, we consider to provide asymptotic convergence and finite-time analysis of the proposed method using a linear function approximator.
On the experimental side, we plan to compare the performance of RVI-SAC using benchmark tasks other than Mujoco tasks and compare it with average reward on-policy methods such as APO \cite{Xiaoteng2021APO}.

\clearpage
\section*{Impact Statement}

This paper presents work whose goal is to advance the field of Reinforcement Learning. There are many potential societal consequences of our work, none which we feel must be specifically highlighted here.


\bibliography{references}
\bibliographystyle{icml2024}

\newpage
\appendix
\onecolumn

\section{Mathematical Notations}
\label{sec:mathematical_notation}

In this paper, we utilize the following mathematical notations:
\begin{itemize}
  \item $e$ denotes a vector with all elements being 1.
  \item $D_{\text{KL}}(p|q)$ represents the Kullback-Leibler divergence, defined between probability distributions $p$ and $q$ as $D_{\text{KL}}(p | q) = \sum_x p(x) \log\frac{p(x)}{q(x)}$.
  \item Here, $\mathbbm{1}(\text{``some condition"})$ is an indicator function, taking the value $1$ when ``some condition" is satisfied, and $0$ otherwise.
  \item $\|\cdot\|_\infty$ denotes the sup-norm.
\end{itemize}

\section{Overall RVI-SAC algorithm and implementation}
\label{sec:Overall_RVI_SAC_algorithm_and_implementation}

\begin{algorithm}[htp]
  \caption{RVI-SAC}
  \label{alg:rvi_sac}
  \begin{algorithmic}[1]
    \small
    \STATE \textbf{Initialize:}{Q-Network parameters $\phi_1, \phi_2, \phi_1', \phi_2'$,
      Delayed $f(Q)$ update parameter $\xi$,
      Policy-Network parameters $\theta$,
      Temperature parameter $\alpha$,
      Q-Network parameters for reset $\phi_{\text{reset}}, \phi_{\text{reset}}'$,
      Delayed $f(Q)$ update parameter $\xi_{\text{reset}}$ for reset,
      and Reset Cost $r_{\text{cost}}$.}

    \FOR{each iteration}
    \STATE Sample action $a \sim \pi(\cdot|s)$
    \STATE Sample next state $s' \sim p(\cdot|s,a)$ and reward $r$
    \IF{$s' \notin \mathcal{S}_{\text{term}}$}
    \STATE Store transition $(s, a, r, s', \text{is\_reset\_step} = \text{false})$ in replay buffer $\mathcal{D}$
    \ELSIF{$s' \in \mathcal{S}_{\text{term}}$}
    \STATE Reset environment to initial state $s_0$
    \STATE Store transition $(s, a, r, s_0, \text{is\_reset\_step} = \text{true})$ in replay buffer $\mathcal{D}$
    \ENDIF
    \STATE Sample a mini-batch $\mathcal{B}$ from $\mathcal{D}$
    \STATE Update $\phi_1, \phi_2$ by minimizing Q-Network loss $J(\phi_i)$ (Eq. \ref{eq:Q_Network_Loss})
    \STATE Update $\xi$ using delayed f(Q) update method (Eq. \ref{eq:Xi_update})
    \STATE Update $\phi_{\text{reset}}$ by minimizing Reset Q-Network loss $J(\phi_{\text{reset}})$ (Eq. \ref{eq:Reset_Q_Network_Loss})
    \STATE Update $\xi_{\text{reset}}$ using delayed f(Q) update method (Eq. \ref{eq:Reset_Xi_update})
    \STATE Update $\theta$ by minimizing Policy-Network loss $J(\theta)$ (Eq. \ref{eq:Policy_Network_Loss})
    \STATE Update $\alpha$ by minimizing Temperature Parameter loss $J(\alpha)$ (Eq. \ref{eq:Temperature_Parameter_Loss})
    \STATE Update $r_{\text{cost}}$ by minimizing Reset Cost loss $J(r_{\text{cost}})$ (Eq. \ref{eq:Reset_Cost_Loss})
    \STATE Update $\phi_1', \phi_2', \phi_{\text{reset}}'$(Eq. \ref{eq:Target_Network_Update})
    \ENDFOR
  \end{algorithmic}
\end{algorithm}


In this section, based on Sections \ref{sec:RVI_Q_learning_based_Q_Network_update}, \ref{sec:Average_Reward_Soft_Policy_Improvement_Theorem}, and \ref{sec:Automatic_Reset_Cost_Adjustment}, we present the overall algorithm of RVI-SAC.

The main parameters to be updated in this algorithm are:
\begin{itemize}
  \item The parameters $\phi_1, \phi_2$ of the Q-Network and their corresponding target network parameters $\phi_1', \phi_2'$,
  \item The parameter $\xi$ for the Delayed f(Q) Update,
  \item The parameters $\theta$ of the Policy-Network,
  \item Additionally, this method introduces automatic adjustment of the temperature parameter $\alpha$, as introduced in SAC-v2\cite{Haarnoja2018SACv2}.
\end{itemize}
Note that the update of the Q-Network uses the Double Q-Value function approximator \cite{Fujimoto2018TD3}.

In cases where environment resets are needed, the following parameters are also updated:
\begin{itemize}
  \item The Reset Cost $r_{\text{cost}}$,
  \item The parameters $\phi_{\text{reset}}$ of the Q-Network for estimating the frequency of environment resets and their corresponding target network parameters $\phi'_{\text{reset}}$,
  \item The parameter $\xi_{\text{reset}}$ for the Delayed f(Q) Update.
\end{itemize}
The Q-Network used to estimate the frequency of environment resets is not directly used in policy updates, therefore the Double Q-Value function approximator is not employed for it.


From Sections \ref{sec:RVI_Q_learning_based_Q_Network_update} and \ref{sec:Average_Reward_Soft_Policy_Improvement_Theorem}, the parameters $\phi_1, \phi_2$ of the Q-Network are updated by minimizing the following loss function:
\begin{equation}\label{eq:Q_Network_Loss}
  J(\phi_i) =  \frac{1}{|\mathcal{B}|} \sum_{(s, a, r, s', \text{is\_reset\_step}) \in \mathcal{B}}\left(Y(s, a, r, s',\text{is\_reset\_step}) - Q_{\phi_i}(s, a)\right)^2, \ i=1,2,
\end{equation}
where
$$
  \begin{aligned}
     & Y(s, a, r, s', \text{is\_reset\_step}) = \hat{r} - \xi + \min_{j=1,2} Q_{\phi'_j}(s', a') - \alpha \log \pi_{\theta}(a'|s'), \\
     & \hat{r} = r - r_{\text{cost}} \mathbbm{1}\left( \text{is\_reset\_step} \right), a' \sim \pi_{\theta}(\cdot|s').
  \end{aligned}
$$
Note that $\hat{r}$ is the reward penalized by the Reset Cost.
$\xi$ is updated as follows using the parameter $\kappa$, based on the Delayed f(Q) update:
\begin{equation}\label{eq:Xi_update}
  \begin{aligned}
     & \xi \leftarrow \xi + \kappa \left(f(Q_{\phi'}^{\text{ent}}; \mathcal{B}) - \xi\right), \\
     & Q_{\phi'}^{\text{ent}}(s,a) := Q_{\phi'}(s,a) - \alpha \log \pi_{\theta}(a|s).
  \end{aligned}
\end{equation}
$Q_{\phi'}^{\text{ent}}(s,a)$ represents the entropy augmented Q function as in Equation \ref{eq:soft_average_reward_Q_ent}.
For a function $Q:\mathcal{S}\times\mathcal{A} \rightarrow \mathbb{R}$, $f(Q; \mathcal{B})$ is calculated as follows:
\begin{equation}\label{eq:F_Sampled_algorithm}
  \begin{aligned}
     & f(Q; \mathcal{B}) = \frac{1}{|\mathcal{B}|} \sum_{(s, a, r, s', \text{is\_reset\_step}) \in \mathcal{B}} Q(s', a'), \\
     & a' \sim \pi_{\theta}(\cdot|s').
  \end{aligned}
\end{equation}

The parameters $\theta$ of the Policy-Network is updated to minimize the following loss function, as described in Section \ref{sec:Average_Reward_Soft_Policy_Improvement_Theorem}, using the same method as in SAC:
\begin{equation}\label{eq:Policy_Network_Loss}
  J(\theta) = \frac{1}{|\mathcal{B}|} \sum_{(s, a, r, s', \text{is\_reset\_step}) \in \mathcal{B}} \left(\alpha \log \pi_{\theta}(a'|s) - \min_{j=1,2} Q_{\phi_j}(s, a')\right), a' \sim \pi_{\theta}(\cdot|s).
\end{equation}
Furthermore, since the theory and update method for the temperature parameter $\alpha$ do not depend on the discount rate $\gamma$, it is updated in the same way as in SAC:
\begin{equation}\label{eq:Temperature_Parameter_Loss}
  J(\alpha) = \frac{1}{|\mathcal{B}|} \sum_{(s, a, r, s', \text{is\_reset\_step}) \in \mathcal{B}} \alpha  \left(- \log \pi_{\theta}(a'|s) - \overline{\mathcal{H}} \right),  a' \sim \pi_{\theta}(\cdot|s).
\end{equation}
where $\overline{\mathcal{H}}$ is the entropy target.

The parameters $\phi_{\text{reset}}$ of the Q-Network for estimating the frequency of environment resets are updated to minimize the following loss function,
\begin{equation}\label{eq:Reset_Q_Network_Loss}
  \begin{aligned}
     & J(\phi_{\text{reset}}) = \frac{1}{|\mathcal{B}|} \sum_{(s, a, r, s', \text{is\_reset\_step}) \in \mathcal{B}}\left(Y_{\text{reset}}(s, a, r, s', \text{is\_reset\_step}) - Q_{\phi_{\text{reset}}}(s, a)\right)^2, \\
     & Y_{\text{reset}}(s, a, r, s',\text{is\_reset\_step}) = \mathbbm{1}\left( \text{is\_reset\_step} \right) - \xi_{\text{reset}} +  Q_{\phi'_{\text{reset}}}(s',a'),                           \\
     & a' \sim \pi_{\theta}(\cdot|s'),
  \end{aligned}
\end{equation}
and $\xi_{\text{reset}}$ is updated as
\begin{equation}\label{eq:Reset_Xi_update}
  \begin{aligned}
    \xi_{\text{reset}} \leftarrow \xi_{\text{reset}} + \kappa \left(f(Q_{\phi'_{\text{reset}}}; \mathcal{B}) - \xi_{\text{reset}}\right)
  \end{aligned}
\end{equation}
using the calculation for $f(Q_{\phi'_{\text{reset}}}; \mathcal{B})$ provided in Equation \ref{eq:F_Sampled_algorithm}.


When using $\xi_{\text{reset}}$ as an estimator for $\rho_{\text{reset}}^\pi$, the Reset Cost $r_{\text{cost}}$ is updated to minimize the following loss function, as described in Section \ref{sec:Automatic_Reset_Cost_Adjustment}:
\begin{equation}\label{eq:Reset_Cost_Loss}
  J(r_{\text{cost}}) =  - r_{\text{cost}} \left(\xi_{\text{reset}} - \epsilon_{\text{reset}} \right).
\end{equation}

The parameters of the target network \( \phi_1', \phi_2', \phi'_{\text{reset}} \) are updated according to the parameter \( \tau \) as follows:
\begin{equation}\label{eq:Target_Network_Update}
  \begin{aligned}
     & \phi_1' \leftarrow \tau \phi_1 + (1 - \tau) \phi_1'                                        \\
     & \phi_2' \leftarrow \tau \phi_2 + (1 - \tau) \phi_2'                                        \\
     & \phi_{\text{reset}}' \leftarrow \tau \phi_{\text{reset}} + (1 - \tau) \phi_{\text{reset}}'
  \end{aligned}
\end{equation}

The pseudocode for the entire algorithm is presented in Algorithm \ref{alg:rvi_sac}.

\section{Convergence Proof of RVI Q-learning with Delayed $f(Q)$ Update}
\label{sec:convergence_proof_of_RVI_Q_learning_with_Delayed_fQ_Update}


In this section, we present the asymptotic convergence of the Delayed f(Q) Update algorithm, as outlined in Equation \ref{eq:RVI_Q_learning_with_Delayed_fQ_Update}.
This algorithm is a two-time-scale stochastic approximation (SA) and updates the Q-values and offsets in average reward-based Q-learning at different time scales.
This approach is similar to the algorithm proposed in \citet{GOSAVI2004654}.
Moreover, the convergence discussion in this section largely draws upon the discussions in \citet{Konda1999ActorCriticSA,GOSAVI2004654}.

\subsection{Proposed algorithm}


In this section, we reformulate the algorithm for which we aim to prove convergence.

Consider an MDP with a finite state-action space that satisfies Assumption \ref{assumption:ergodic}.
For all $(s,a) \in \mathcal{S} \times \mathcal{A}$, let us define the update equations for a scalar sequence ${\xi_k}$ and a tabular Q function ${Q_k}$ as follows:
\begin{eqnarray}
  \xi_{k+1}&=&\xi_{k}+a(k)\left(f(Q_k; X_k)-\xi_{k}\right), \label{eq:Proposed_RVI_fQ_Update} \\
  Q_{k+1}(s,a)&=&Q_{k}(s,a)+b(\nu(k, s, a))\left(r(s,a)- g_{\eta}(\xi_k) +\max_{a'}Q_{k}(s',a') - Q_{k}(s,a)\right) \mathbbm{1}\left((s, a) = \phi_k \right). \label{eq:Proposed_RVI_Q_Update}
\end{eqnarray}
$g_{\eta}(\cdot)$ is a clip function newly added from Equation \ref{eq:RVI_Q_learning_with_Delayed_fQ_Update} to ensure the convergence of this algorithm.
For any $\eta > 0$, it is defined as:
\begin{equation}
  \label{eq:definition_g_eps}
  g_{\eta}(x)=
  \left\{
  \begin{array}{lll}
    \|r\|_\infty + \eta   & \text { for } & x \geq \|r\|_\infty + \eta,                   \\
    x                     & \text { for } & - \|r\|_\infty - \eta<x< \|r\|_\infty + \eta, \\
    - \|r\|_\infty - \eta & \text { for } & x \leq - \|r\|_\infty - \eta.
  \end{array}
  \right.
\end{equation}
At discrete time steps $k$, $\phi_k$ denotes the sequence of state/action pairs updated at time $k$, and $s'$ represents the next state sampled when the agent selects action $a$ in state $s$.
The function $\nu(k,s,a)$ counts the number of updates to $Q(s,a)$ up to time $k$, defined as $\nu(k,s,a)=\sum_{m = 0}^{k} \mathbbm{1}\left((s, a) = \phi_m \right)$.
The functions $a(\cdot)$ and $b(\cdot)$ represent the step-size.
For the random variable $X_k$ and the function $f(\cdot;\cdot)$, we introduce the following assumption within the increasing $\sigma$-field $\mathcal{F}_k = \sigma(\xi_n, Q_n, n \leq k, w_{n,1}, w_{n,2}, n < k)$, where $w_{n,1}, w_{n,2}$ are defined in Equation \ref{eq:definition_w}.
\begin{assumption}\label{assumption:f(Q;X)property}
  It holds that
  $$
    \underE{}{f(Q_k; X_k) - f(Q_k)\mid \mathcal{F}_k} = 0,
  $$
  and for some constant $K$,
  $$
    \underE{}{\| f(Q_k; X_k) - f(Q_k) \|^2  \mid \mathcal{F}_k} < K (1 + \|Q_k\|^2).
  $$
\end{assumption}
This assumption is obviously satisfied when $f$ is set as in Equation \ref{eq:F_Sampled_unbiased_estimator}.

\subsection{The ODE framework and stochastic approximation(SA)}

In this section, we revisit the convergence results for SA with two update equations on different time scales, as demonstrated in \citet{Konda1999ActorCriticSA,GOSAVI2004654}.

The sequences $\left\{x_{k}\right\}$ and $\left\{y_{k}\right\}$ are in $\mathbb{R}^{n}$ and $\mathbb{R}^{l}$, respectively.
For $i=1, \ldots, n$ and $j=1, \ldots, l$, they are generated according to the following update equations:
\begin{eqnarray}
  x^{i}_{k+1}&=&x^{i}_{k}+a(\nu_1(k,i))\left(h^{i}\left(x_{k}, y_{k}\right)+w^{i}_{k,1}\right) \mathbbm{1}\left(i=\phi_{k,1}\right), \label{eq:SA1} \\
  y^{j}_{k+1}&=&y^{j}_{k}+b(\nu_2(k,j))\left(f^{j}\left(x_{k}, y_{k}\right)+w^{j}_{k,2}\right) \mathbbm{1}\left(j=\phi_{k,2}\right), \label{eq:SA2}
\end{eqnarray}
where the superscripts in each vector represent vector indices, and $\left\{\phi_{k,1}\right\}$ and $\left\{\phi_{k,2}\right\}$ are stochastic processes taking values on the sets $S_{1}=\{1,2, \ldots, n\}$ and $S_{2}=\{1,2, \ldots, l\}$, respectively.
The functions $h(\cdot, \cdot)$ and $f(\cdot, \cdot)$ are arbitrary functions of $(x_{k}, y_{k})$.
The terms $w_{k,1}, w_{k,2}$ represent noise components, and $\nu_1, \nu_2$ are defined as:
$$
  \begin{aligned}
    \nu_1(k,i) & =\sum_{m=0}^{k} \mathbbm{1}\left(i=\phi_{m, 1}\right), \\
    \nu_2(k,j) & =\sum_{m=0}^{k} \mathbbm{1}\left(j=\phi_{m, 2}\right).
  \end{aligned}
$$


In the context of the proposed method (Equations \ref{eq:Proposed_RVI_fQ_Update} and \ref{eq:Proposed_RVI_Q_Update}), $x_k$ corresponds to $\xi_k$, and $y_k$ corresponds to $Q_k$.
This implies that $n$ is 1, and $l$ is equal to the number of state/action pairs.
Consequently, $\nu_1(k,i)$ corresponds $k$ due to $n=1$, and $\nu_2(k,j)$ corresponds to $\nu(k, s, a)$.

We assume the following assumptions for this SA:
\begin{assumption}
  \label{assumption:Lipschitz}
  The functions $h$ and $f$ are Lipschitz continuous.
\end{assumption}
\begin{assumption}
  \label{assumption:visitation_freq}
  There exist $\Delta>0$ such that
  $$\liminf _{k \rightarrow \infty} \frac{\nu_1(k,i)}{k+1} \geq \Delta,$$
  and
  $$\liminf_{k \rightarrow \infty} \frac{\nu_2(k,j)}{k+1} \geq \Delta.$$
  \textit{almost surely}, for all $i=1, 2, \ldots, n$ and $j=1,2, \ldots, l$.
  Furthermore, if, for $a(\cdot), b(\cdot)$ and $x>0$,
  $$
    \begin{aligned}
      N(k, x)  & =\min \left\{m>k: \sum_{i=k+1}^{m} \overline{a}(i) \geq x\right\}, \\
      N'(k, x) & =\min \left\{m>k: \sum_{i=k+1}^{m} \overline{b}(i) \geq x\right\},
    \end{aligned}
  $$
  where $\overline{a}(i) = a(\nu_1(i, \phi_{i,1})), \overline{b}(i) = b(\nu_2(i, \phi_{i,2}))$ , then the limits
  $$
    \begin{aligned}
       & \lim_{k \rightarrow \infty} \frac{\sum_{m=\nu_{1}(k,i')}^{\nu_{1}\left(N(k, x), i'\right)} a(m)}{\sum_{m=\nu_{1}(k,i)}^{\nu_{1}(N(k, x),i)} a(m)},              \\
       & \lim_{k \rightarrow \infty} \frac{\sum_{m=\nu_{2}(k,j')}^{\nu_{2}\left(N'(k, x), j'\right)} b(m)}{\sum_{m=\nu_{2}(k, j)}^{\nu_{2}\left(N'(n, x),j\right)} b(m)}
    \end{aligned}
  $$
  exist \textit{almost surely} for all $i,i',j,j'$ (Together, these conditions imply that the components are updated ``comparably often" in an ``evenly spread" manner.)

\end{assumption}
\begin{assumption}
  \label{assumption:step_size1}
  Let $c(k)$ be $a(k)$ or $b(k)$.
  The standard conditions for convergence that $c(k)$ must satisfy are as follows:
  \begin{itemize}
    \item $\sum_k c(k) = \infty, \sum_k c^2(k) < \infty$
    \item For $x \in(0,1)$,

          $$
            \sup _{k} c([x k]) / c(k)<\infty,
          $$

          where $[\cdots]$ stands for the integer part of ``...".
    \item For $x \in(0,1)$ and $A(k)=\sum_{i=0}^{k} c(i)$,

          $$
            A([y k]) / A(k) \rightarrow 1,
          $$

          uniformly in $y \in[x, 1]$.
  \end{itemize}
\end{assumption}
\begin{assumption}
  \label{assumption:step_size2}
  In addition to Assumption \ref{assumption:step_size1}, the following conditions must be satisfied:
  $$\lim _{k \rightarrow \infty} \sup \frac{b(k)}{a(k)}=0$$.
\end{assumption}
\begin{assumption}
  \label{assumption:martingale_difference}
  Let $\mathcal{F}_k = \sigma(x_{n}, y_{n}, n \leq k, w_{n,1}, w_{n,2}, n < k)$ be a increasing $\sigma$-field.
  For some constants ,$K_1$ and $K_2$, the following condition is satisfied:

  $$
    \begin{aligned}
       & \underE{}{w_{k,1} \mid\mathcal{F}_k}=0,                                                                               \\
       & \underE{}{\left\|w_{k,1}\right\|^{2} \mid\mathcal{F}_k} \leq K_1 (1+\left\|x_{k}\right\|^2+\left\|y_{k}\right\|^{2}),
    \end{aligned}
  $$

  and

  $$
    \begin{aligned}
       & \underE{}{w_{k,2} \mid\mathcal{F}_k}=0,                                                                               \\
       & \underE{}{\left\|w_{k,2}\right\|^{2} \mid\mathcal{F}_k} \leq K_2 (1+\left\|x_{k}\right\|^2+\left\|y_{k}\right\|^{2}).
    \end{aligned}
  $$
\end{assumption}
\begin{assumption}
  \label{assumption:boundedness}
  The iterations of $x_k$ and $y_k$ are bounded.
\end{assumption}
\begin{assumption}
  \label{assumption:h(x,y)}
  For all $y \in \mathbb{R}^{l}$, the ODE
  \begin{equation}
    \label{eq:ODE_h}
    \dot{x}_t=h(x_t, y)
  \end{equation}
  has an asymptotically stable critical point $\lambda(y)$ such that the map $\lambda$ is Lipschitz continuous.
\end{assumption}
\begin{assumption}
  \label{assumption:f(x,y)}
  The ODE
  \begin{equation}
    \label{eq:ODE_f}
    \dot{y}_t=f(\lambda(y_t), y_t)
  \end{equation}
  has a global asymptotically stable critical point $y^{*}$.
\end{assumption}
Note that, $t$ represents continuous time.

From \citet{Konda1999ActorCriticSA,GOSAVI2004654}, the following theorem holds:
\begin{theorem}
  \label{thm:convergence_of_SA}
  Let Assumption \ref{assumption:Lipschitz} to \ref{assumption:f(x,y)} hold.
  Then, $\{(x_{k}, y_{k})\}$ converges almost surely to $\left(\lambda\left(y^*\right), y^*\right)$.
\end{theorem}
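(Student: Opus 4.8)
The plan is to treat Equations \ref{eq:SA1}--\ref{eq:SA2} as an asynchronous stochastic approximation on two time scales and to reduce the claim to the general convergence results of \citet{Borkar1997TwoTimeScalesSA,Konda1999ActorCriticSA}; Assumptions \ref{assumption:Lipschitz}--\ref{assumption:f(x,y)} are tailored to be exactly the hypotheses those results require, so the work is mostly in organizing the reduction. The decisive structural point is Assumption \ref{assumption:step_size2}: since $b(k)/a(k)\to 0$, the $x$-iteration \ref{eq:SA1} evolves on the \emph{fast} time scale while the $y$-iteration \ref{eq:SA2} evolves on the \emph{slow} one, so on the $a(\cdot)$-clock the slow variable $y_k$ is essentially frozen.

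\textbf{Step 1: the fast variable tracks $\dot x = h(x,y)$ with $y$ quasi-static.} By Assumption \ref{assumption:step_size2} the increment of $y_k$ in \ref{eq:SA2}, measured against the fast step sizes, is $o(1)$, so $y_k$ varies negligibly over any fixed span of $a(\cdot)$-time. The asynchronous step-size regularity conditions (Assumptions \ref{assumption:visitation_freq} and \ref{assumption:step_size1}) allow a componentwise time-rescaling under which the asynchronous iteration \ref{eq:SA1} has the same limiting trajectory as its synchronous counterpart; the martingale noise $w_{k,1}$ is controlled because $\sum_k c(k)^2<\infty$ together with the boundedness Assumption \ref{assumption:boundedness} makes the noise partial sums a convergent $L^2$-bounded martingale. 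Consequently, along almost every sample path the linearly interpolated trajectory of $x_k$ is an asymptotic pseudotrajectory of $\dot x_t = h(x_t, y)$ for the (slowly drifting) value $y = y_k$; since that ODE has the asymptotically stable equilibrium $\lambda(y)$ with $\lambda$ Lipschitz (Assumption \ref{assumption:h(x,y)}) and, by Assumption \ref{assumption:boundedness}, all iterates remain in a compact set, a Gronwall-type estimate yields $\|x_k - \lambda(y_k)\| \to 0$ almost surely.

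\textbf{Step 2: the slow variable tracks the reduced ODE $\dot y = f(\lambda(y),y)$.} Writing $x_k = \lambda(y_k) + e_k$ with $e_k \to 0$ and using Lipschitz continuity of $f$ and $\lambda$ (Assumptions \ref{assumption:Lipschitz}, \ref{assumption:h(x,y)}), the driving term in \ref{eq:SA2} equals $f(\lambda(y_k), y_k)$ plus the martingale difference $w_{k,2}$ plus a term vanishing almost surely. Applying the asynchronous SA machinery a second time, now with step sizes $b(\cdot)$ and with $w_{k,2}$ handled exactly as before, the interpolated $y_k$ is an asymptotic pseudotrajectory of \ref{eq:ODE_f}, whose unique globally asymptotically stable equilibrium is $y^*$ (Assumption \ref{assumption:f(x,y)}); hence $y_k \to y^*$ almost surely. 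Combining with Step 1 and continuity of $\lambda$ gives $x_k \to \lambda(y^*)$, i.e. $(x_k, y_k) \to (\lambda(y^*), y^*)$.

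The main obstacle is the simultaneous presence of asynchrony and time-scale separation in Step 1: one must ensure that the unequal per-component update counts $\nu_1(k,i)$ (and later $\nu_2(k,j)$) do not break the ODE approximation. This is precisely what Assumption \ref{assumption:visitation_freq} buys — a uniform positive lower bound $\Delta$ on relative visitation frequencies together with the ``evenly spread'' limit conditions — and verifying that the rescaled asynchronous interpolation stays uniformly close, on compact time intervals, to the solution of the synchronous ODE is the delicate technical core, for which I would follow the constructions in \citet{Konda1999ActorCriticSA,GOSAVI2004654} rather than rederiving them from scratch.
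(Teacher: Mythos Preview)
Your proposal is correct and matches the paper's approach in spirit: both reduce the claim to the asynchronous two-time-scale SA results of \citet{Konda1999ActorCriticSA,GOSAVI2004654}. The paper, however, is even more terse than you are --- it does not spell out the fast/slow ODE tracking argument at all, but simply cites those results directly and makes one observation: \citet{Konda1999ActorCriticSA} assumes a projection $P$ onto a closed convex set $G$ applied to the $y$-update, whereas here Assumption \ref{assumption:boundedness} (boundedness of the iterates) is used instead, and boundedness guarantees the existence of a projection that acts as the identity on the actual trajectory, so the cited theorem applies verbatim. Your Steps 1 and 2 are a faithful unpacking of what happens inside those references, so nothing is wrong; you are just doing more work than the paper does.
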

This theorem is slightly different from the problem setting for the convergence of two-time-scale SA as described in \citet{Konda1999ActorCriticSA}.
In \citet{Konda1999ActorCriticSA}, it is assumed that a projection mapping $P$ is applied to the entire right-hand side of the update equation for $y_k$ (Equation \ref{eq:SA2}), such that $P(x)=y, \ y \in G, |x-y|=\inf _{z \in G}|z-x|$ for some closed convex set $G$.
Instead of this setting, we assume in Assumption \ref{assumption:boundedness} that $y_k$ is bounded.
With this assumption, there exists a projection mapping $P$ that, even if applied to the right-hand side of Equation \ref{eq:SA2}, would not affect the values of $y_k$.
Therefore, Theorem \ref{thm:convergence_of_SA} is essentially encompassed by the results in \citet{Konda1999ActorCriticSA}.

\subsection{Proof}


We show the convergence of the proposed method by verifying that each of the assumptions presented in the previous section is satisfied.

First, we prepare ODEs related to the update equations \ref{eq:Proposed_RVI_fQ_Update} and \ref{eq:Proposed_RVI_Q_Update}.
The mappings $H_1$ and $H_2$ are defined as follows:
$$
  \begin{aligned}
    H_1(\xi, Q)       & = f(Q),                                                                              \\
    H_2(\xi, Q)(s, a) & = r(s, a) - g_\eta(\xi) + \sum_{s'} p(s'|s,a){\max_{a'} Q(s', a')}. \label{eq:map_H}
  \end{aligned}
$$
We rewrite Equations \ref{eq:Proposed_RVI_fQ_Update} and \ref{eq:Proposed_RVI_Q_Update} using the mappings $H_1$ and $H_2$ as follows:
\begin{eqnarray}
  \xi_{k+1}    &=& \xi_{k}+a(k)\left(H_1(\xi_k, Q_k) - \xi_k + w_{k,1}\right),                                                                \label{eq:Proposed_RVI_fQ_Update_rewritten} \\
  Q_{k+1}(s,a) &=&  Q_{k}(s,a)+b(\nu(k, s, a))\left(H_2(\xi_k, Q_k)(s,a) - Q_k(s,a) + w_{k,2}(s,a)\right) \mathbbm{1}\left((s, a) = \phi_{k} \right). \label{eq:Proposed_RVI_Q_Update_rewritten}
\end{eqnarray}
Here, the noise terms $w_{k,1}$ and $w_{k,2}$ are defined respectively as follows:
\begin{equation}
  \label{eq:definition_w}
  \begin{aligned}
    w_{k,1}      & = f(Q_k; X_k) - H_1(Q_k, \xi_k) = f(Q_k; X_k) - f(Q_k),                      \\
    w_{k,2}(s,a) & = r(s,a) - g_\eta(\xi_k) + \max_{a'} Q_k(s', a') - H_2(Q_k, \xi_k)(s,a).
  \end{aligned}
\end{equation}


Using $H_1$ and $H_2$, we define the ODEs related to Equations \ref{eq:Proposed_RVI_fQ_Update} and \ref{eq:Proposed_RVI_Q_Update} (where Equation \ref{eq:ODE_xi} corresponds to Equation \ref{eq:Proposed_RVI_fQ_Update}, and Equation \ref{eq:ODE_Q} corresponds to Equation \ref{eq:Proposed_RVI_Q_Update}) as follows:
\begin{eqnarray}
  \dot{\xi}_t &=& H_1(\xi_t, Q) - \xi_t, \ \ \forall Q \label{eq:ODE_xi} \\
  \dot{Q}_t   &=& H_2(\lambda(Q_t), Q_t) - Q_t.\label{eq:ODE_Q}
\end{eqnarray}

\subsubsection{Boundness of the iteration (Assumption \ref{assumption:boundedness})}

In this section, we show that Assumption \ref{assumption:boundedness} holds for the iterations defined in Equations \ref{eq:Proposed_RVI_fQ_Update} and \ref{eq:Proposed_RVI_Q_Update}.
To this end, we introduce an assumption for the MDP as introduced in \citet{GOSAVI2004654}
\begin{assumption}
  \label{assumption:contractive_mdp}
  There exists a state $s$ in the Markov chain such that for some integer $m$, and for all initial states and all stationary policies, $s$ is visited with a positive probability at least once within the first $m$ timesteps.
\end{assumption}
Under this assumption, the mapping $T$ defined as
\begin{equation}
  \label{eq:definition_T}
  T(Q)(s, a) = r(s, a) + \sum_{s'} p(s'|s,a){\max_{a'} Q(s', a')}
\end{equation}
is shown to be a contraction mapping with respect to a certain weighted sup-norm. (For proof, see Appendix A.5 in \citet{GOSAVI2004654}).
This means that there exists a vector $\gamma$ and a scalar $\delta \in (0,1), D>0$, such that
$$
  \| T(Q) \|_{\gamma} \leq \delta \| Q \|_{\gamma} + D
$$
is satisfied.
Here, $|v|_{\gamma}$ is defined as a weighted sup-norm given by
$$
  \|v\|_{\gamma} = \max_{s,a \in \mathcal{S} \times \mathcal{A}} \frac{|v(s,a)|}{\gamma(s,a)}.
$$
Regarding $H_2$, it holds that
$$
  \begin{aligned}
    H_2(\xi, Q)(s,a)               & = T(Q)(s,a) - g_\eta(\xi)         \\
    \Rightarrow |H_2(\xi, Q)(s,a)| & \leq |T(Q)(s,a)| + |g_\eta(\xi)|,\   \forall s,a.
  \end{aligned}
$$
From the definition of $g_\eta(\xi)$ (Equation \ref{eq:definition_g_eps}), $g_\eta(\xi)$ is bounded.
Therefore, for some $D_1 > 0$ and for any $\xi$, the following is satisfied:
$$
  \begin{aligned}
    \|H_2(\xi, Q)\|_{\gamma}             & \leq \|T(Q)\|_{\gamma} + D_1            \\
    \Rightarrow \|H_2(\xi, Q)\|_{\gamma} & \leq \delta \| Q \|_{\gamma} + D + D_1.
  \end{aligned}
$$
Consequently, utilizing the results from \cite{Tsitsiklis1994}, the iteration expressed in Equation \ref{eq:Proposed_RVI_Q_Update_rewritten}, which employs $H_2$, maintains the boundedness of $Q_k$.
Additionally, when $Q_k$ is bounded, $f(Q_k; X_k)$ is also bounded, thereby ensuring that $\xi_k$ remains bounded as well.

\subsubsection{Convergence of the ODE (Assumption \ref{assumption:h(x,y)}, \ref{assumption:f(x,y)})}

We verify that Equation \ref{eq:ODE_xi} satisfies Assumption \ref{assumption:h(x,y)}.
The function $H_1(\xi, Q)$ in Equation \ref{eq:ODE_xi} is independent of $\xi$, and when $Q$ is fixed, $H_1(\xi, Q)$ becomes a constant.
Therefore, it is obvious that Assumption \ref{assumption:h(x,y)} is satisfied, and we have
$$
  \lambda(Q) = f(Q).
$$
Consequently, we can rewrite Equation \ref{eq:ODE_Q} as follows:
\begin{equation}
  \label{eq:ODE_Q_rewritten}
  \dot{Q}_t = H_2(f(Q_t), Q_t) - Q_t = T(Q_t) - g_\eta(f(Q_t))e - Q_t.
\end{equation}

Next, we verify Assumption \ref{assumption:f(x,y)} for Equation \ref{eq:ODE_Q_rewritten}.
To demonstrate the convergence of Equation \ref{eq:ODE_Q_rewritten}, we introduce the following lemma from \citet{Wan2020DifferentialQLearning}:
\begin{lemma}
  \label{lemma:convergence_of_RVI}
  The following ODE
  \begin{equation}
    \label{eq:ODE_RVI}
    \dot{w}_t = T(w_t) - f(w_t)e - w_t
  \end{equation}
  is globally asymptotically stable and converges to $w_t \rightarrow q^*$. 
  Here, $q^*$ satisfies the optimal Bellman equation (as shown in Equation \ref{eq:average_reward_optimal_bellman_equation}) and the following conditions with respect to the function $f$:
  $$
    \begin{aligned}
      q^*(s,a) & = r(s, a) - \rho^* + \sum_{s' \in \mathcal{S}} p(s'|s, a) \max_{a'} q^*(s', a'), \\
      \rho^*   & = f(q^*).
    \end{aligned}
  $$
\end{lemma}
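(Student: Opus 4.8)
The plan is to reproduce the ODE-stability argument of \citet{Wan2020DifferentialQLearning} (which generalizes \citet{Abounadi2001RVI}), since Lemma~\ref{lemma:convergence_of_RVI} is precisely that result specialized to the operator $T$ of Equation~\ref{eq:definition_T}. The argument splits into two tasks: (a) identifying $q^*$ as the \emph{unique} equilibrium of the ODE in Equation~\ref{eq:ODE_RVI}, and (b) proving that this equilibrium is globally asymptotically stable.

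For task (a), the first step is to recall the classical facts about the average-reward optimal Bellman equation (Equation~\ref{eq:average_reward_optimal_bellman_equation}), already stated just below it: under Assumption~\ref{assumption:ergodic} the equation $w = T(w) - \rho e$ has a solution that is unique up to an additive multiple of $e$, and the scalar $\rho$ is uniquely determined and equals $\rho^*$. Next I would use the structural properties of $f$ (Lipschitz, $f(e)=u>0$, $f(w+ce)=f(w)+cu$, $f(cw)=cf(w)$): the map $c\mapsto f(w_0+ce)$ is an affine bijection of $\mathbb{R}$, so there is a unique $c^*$ with $f(w_0+c^*e)=\rho^*$; set $q^*:=w_0+c^*e$. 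Then $T(q^*)-f(q^*)e-q^* = T(q^*)-\rho^* e-q^* = 0$, so $q^*$ is an equilibrium and satisfies both identities displayed in the lemma. Conversely, any equilibrium $w$ obeys $w=T(w)-f(w)e$, i.e.\ it solves the Bellman equation with $\rho=f(w)$; uniqueness of $\rho$ forces $f(w)=\rho^*$, so $w$ is a Bellman solution with the correct normalization, hence $w=q^*$. This gives uniqueness of the equilibrium.

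Task (b) is the real content. Write the ODE as $\dot w = G(w)-w$ with $G(w):=T(w)-f(w)e$. Here $T$ is monotone ($w_1\le w_2\Rightarrow T(w_1)\le T(w_2)$) and commutes with constant shifts ($T(w+ce)=T(w)+ce$), which together make $T$ nonexpansive in $\|\cdot\|_\infty$; under the recurrence condition on the MDP already invoked for boundedness (Assumption~\ref{assumption:contractive_mdp}) it is moreover a contraction in a weighted sup-norm. Subtracting $f(w)e$ leaves the span seminorm unchanged, while the affine behaviour $f(w+ce)=f(w)+cu$ pins down the dynamics in the one-dimensional ``constant'' direction $\mathbb{R}e$, where the flow toward $q^*$ is governed by a stable scalar equation; on a complement of $\mathbb{R}e$ the flow inherits the contractivity of $T$. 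Feeding this decomposition into the standard result on ODEs driven by nonexpansive maps with a unique fixed point (as used by \citet{Abounadi2001RVI,GOSAVI2004654}), or equivalently exhibiting a Lyapunov function of this shape, yields global asymptotic stability at $q^*$. I expect the main obstacle to be making the nonexpansiveness/contraction estimate for $G=T-f(\cdot)e$ precise for a \emph{general} $f$ satisfying the stated properties — not merely the special cases $f(w)=w(s_0,a_0)$ or a fixed linear functional — and cleanly isolating the neutral constant-shift mode; this is exactly the step at which \citet{Wan2020DifferentialQLearning} strengthened \citet{Abounadi2001RVI}. Once (a) and (b) are in hand, Lemma~\ref{lemma:convergence_of_RVI} follows, and it is precisely the statement needed to verify Assumption~\ref{assumption:f(x,y)} for the ODE in Equation~\ref{eq:ODE_Q_rewritten}.
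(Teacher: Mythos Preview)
Your proposal is correct in content, but you should know that the paper does not actually prove Lemma~\ref{lemma:convergence_of_RVI}: it is simply \emph{quoted} from \citet{Wan2020DifferentialQLearning} and used as a black box to feed into the analysis of the clipped ODE (Equation~\ref{eq:ODE_Q_rewritten}). So there is no ``paper's own proof'' to compare against; your plan of reproducing the Abounadi--Wan ODE-stability argument is exactly the argument that underlies the cited result, and your identification of $q^*$ as the unique equilibrium (via the affine bijection $c\mapsto f(w_0+ce)$ and uniqueness of $\rho^*$) together with the decomposition into the constant-shift direction and its complement is the standard route. If your goal is merely to match the paper, a one-line citation to \citet{Wan2020DifferentialQLearning} suffices; if you want a self-contained proof, your outline is sound, with the caveat you already flagged that handling general $f$ (not just a coordinate or linear functional) in the nonexpansiveness step is where the care is needed.
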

For Equation \ref{eq:ODE_Q_rewritten}, we demonstrate that the following lemma holds:
\begin{lemma}
  \label{lemma:convergence_of_RVI2}
  The ODE shown in Equation \ref{eq:ODE_Q_rewritten} is globally asymptotically stable and converges to $Q_t \rightarrow q^*$.
  Here, $q^*$ is the same as the $q^*$ in Lemma \ref{lemma:convergence_of_RVI}.
\end{lemma}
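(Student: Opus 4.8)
The plan is to reduce the clipped ODE (Equation~\ref{eq:ODE_Q_rewritten}) to the unclipped one of Lemma~\ref{lemma:convergence_of_RVI} (Equation~\ref{eq:ODE_RVI}) by splitting off the component of the flow along $e$. Given a trajectory $Q_t$ of Equation~\ref{eq:ODE_Q_rewritten} with initial condition $Q_0$, I would introduce the ansatz $Q_t = w_t + \psi_t e$, where $u>0$ is the constant from the assumption on $f$ and $(w_t,\psi_t)$ solves the cascade system with $w_0=Q_0$, $\psi_0=0$:
\[
  \dot w_t = T(w_t) - f(w_t)e - w_t, \qquad \dot\psi_t = f(w_t) - g_\eta\!\left(f(w_t) + u\,\psi_t\right).
\]
Using $T(w+\psi e) = T(w) + \psi e$ (immediate from Equation~\ref{eq:definition_T}, since $\sum_{s'}p(s'|s,a)=1$) and $f(w+\psi e)=f(w)+\psi u$, a one-line computation shows that $w_t + \psi_t e$ solves Equation~\ref{eq:ODE_Q_rewritten}; since the right-hand side of Equation~\ref{eq:ODE_Q_rewritten} is Lipschitz ($T$, $f$, $g_\eta$ all are), uniqueness of solutions forces $Q_t = w_t + \psi_t e$. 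The cascade system has a locally Lipschitz right-hand side, and since $g_\eta$ is bounded and $f$ is bounded on the (bounded) range of $w_t$, its solution exists for all $t\ge 0$ and $\psi_t$ stays bounded.

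The $w_t$-equation is exactly Equation~\ref{eq:ODE_RVI}, so Lemma~\ref{lemma:convergence_of_RVI} gives $w_t \to q^*$ and $f(w_t) \to f(q^*) = \rho^*$. The scalar $\psi_t$-equation is then a converging-input perturbation of the autonomous ODE $\dot\psi = \rho^* - g_\eta(\rho^* + u\psi)$. The key fact is $|\rho^*| \le \|r\|_\infty$ (the optimal average reward is a convex combination of rewards), so $\rho^*$ lies strictly inside the clipping interval $(-\|r\|_\infty-\eta,\ \|r\|_\infty+\eta)$ of Equation~\ref{eq:definition_g_eps}; hence $g_\eta(\rho^*) = \rho^*$, the map $\psi \mapsto \rho^* - g_\eta(\rho^*+u\psi)$ vanishes only at $\psi=0$, equals $-u\psi$ near $0$, and keeps the sign of $-\psi$ globally (it saturates to the negative constant $\rho^* - \|r\|_\infty - \eta \le -\eta$ for large positive $\psi$, and to a positive constant for large negative $\psi$). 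Thus $\psi=0$ is globally asymptotically stable for the limiting ODE. Since $g_\eta$ is $1$-Lipschitz, the perturbation $\bigl(f(w_t) - g_\eta(f(w_t)+u\psi)\bigr) - \bigl(\rho^* - g_\eta(\rho^*+u\psi)\bigr)$ is bounded by $2|f(w_t)-\rho^*| \to 0$ uniformly in $\psi$; a standard converging-perturbation (total stability) argument — or a direct Lyapunov estimate with $V(\psi)=\psi^2$ — then yields $\psi_t \to 0$, and therefore $Q_t = w_t + \psi_t e \to q^*$.

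To conclude, $q^*$ is the unique equilibrium of Equation~\ref{eq:ODE_Q_rewritten}: the same splitting shows any equilibrium has $w$ an equilibrium of Equation~\ref{eq:ODE_RVI} (so $w=q^*$) and $\psi=0$. Lyapunov stability of $q^*$ follows because $g_\eta(f(q^*))=f(q^*)$ and, by continuity of $f$, on a neighborhood of $q^*$ the value $f(Q)$ stays inside the clipping interval, so that Equation~\ref{eq:ODE_Q_rewritten} coincides there with the (stable) ODE~\ref{eq:ODE_RVI}; together with global attractivity this gives global asymptotic stability with limit $q^*$. I expect the main obstacle to be the cascade step — rigorously concluding that the converging input $f(w_t)\to\rho^*$ drives the scalar $\psi$-dynamics to $0$, together with checking well-posedness and uniform boundedness of the ansatz; the remaining parts are routine consequences of Lemma~\ref{lemma:convergence_of_RVI} and the elementary structure of $g_\eta$.
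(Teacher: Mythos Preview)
Your proposal is correct and essentially identical to the paper's proof: both decompose $Q_t$ along $e$ as $Q_t = w_t + \psi_t e$ (the paper writes $v_t e = w_t - Q_t$, so $\psi_t = -v_t$), invoke Lemma~\ref{lemma:convergence_of_RVI} for $w_t$, and then show the resulting scalar ODE $\dot\psi_t = f(w_t) - g_\eta(f(w_t)+u\psi_t)$ drives $\psi_t\to 0$ via the Lyapunov function $V=\psi^2$ once $f(w_t)$ is sufficiently close to $\rho^*$. Your local-coincidence argument for Lyapunov stability near $q^*$ is exactly the content of the paper's Lemma~\ref{lemma:lyapunov_stability}.
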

\begin{proof}
  First, from the definition of function $g_\eta(\cdot)$ (Equation \ref{eq:definition_g_eps}), it is obvious that $g_\eta(f(q^*)) = f(q^*)$, thus $q^*$ is an equilibrium point of the ODE shown in Equation \ref{eq:ODE_Q_rewritten}.

  Next, we show that the ODE presented in Equation \ref{eq:ODE_Q_rewritten} satisfies Lyapunov stability.
  That is, we need to show that, for any given $\epsilon > 0$, there exists $\delta > 0$ such that if $\|q^* - Q_{0}\|_{\infty} \leq \delta$, then it implies $\|q^* - Q_{t}\|_{\infty} \leq \epsilon$ for all $t \geq 0$.
  To demonstrate this, the following lemma is presented:
  \begin{lemma}
    \label{lemma:lyapunov_stability}
    Let $L$ be the Lipschitz constant of the function $f$.
    If $\|q^* - Q_0\|_\infty \leq \frac{\eta}{L(1+L)}$, then for the solution $Q_t$ of the ODE (Equation \ref{eq:ODE_Q_rewritten}) and the solution $w_t$ of the ODE (Equation \ref{eq:ODE_RVI}) with $Q_0 = w_0$, it holds that $Q_t = w_t$.
  \end{lemma}
  \begin{proof}
    From \citet{Wan2020DifferentialQLearning}, it is known that the following holds for the ODE in Equation \ref{eq:ODE_RVI}:
    $$
      \begin{aligned}
        \left|\rho^* - f(w_t)\right| & = \left|f(q^*) - f(w_t)\right|    \\
                                     & \leq L \|q^* - w_t\|_\infty       \\
                                     & \leq L(1+L) \|q^* - w_0\|_\infty. \\
      \end{aligned}
    $$
    Here, we choose $w_0$ such that $\|q^* - w_0\|_\infty \leq \frac{\eta}{L(1+L)}$.
    Under this condition,
    $$
      \begin{aligned}
         & \left|\rho^* - f(w_t)\right| \leq \eta                                 \\
         & \Rightarrow \rho^* - \eta \leq f(w_t) \leq \rho^* + \eta               \\
         & \Rightarrow -\|r\|_\infty - \eta \leq f(w_t) \leq \|r\|_\infty + \eta. \\
      \end{aligned}
    $$
    This result implies that $g_\eta(f(w_t)) = f(w_t)$ for all $t \geq 0$.
    Therefore, for the ODE in Equation \ref{eq:ODE_Q_rewritten} with $Q_0 = w_0$, it follows that $Q_t = w_t$.
  \end{proof}
  Given that the ODE in Equation \ref{eq:ODE_RVI} satisfies Lyapunov stability (as shown in \citet{Wan2020DifferentialQLearning}),
  it follows from Lemma \ref{lemma:lyapunov_stability} that, for any $\epsilon$, by choosing $\delta \leq \frac{\eta}{L(1+L)}$,
  the ODE in Equation \ref{eq:ODE_Q_rewritten} also satisfies Lyapunov stability.

  To demonstrate global asymptotic stability, we need to show that, for any initial $Q_0$, $\lim_{t\rightarrow\infty}\|q^* - Q_t\|_\infty = 0$.
  Setting $w_0 = Q_0$ and defining $v_t e = w_t - Q_t$.
  Then, we have
  $$
    \begin{aligned}
      \dot{v}_t e & =  \dot{w}_t - \dot{Q}_t                                                           \\
                  & = T(w_t) - f(w_t)e - w_t - \left(T(Q_t) - g_\eta(f(Q_t))e - Q_t\right)             \\
                  & = T(w_t) - T(w_t - v_t e) - \left(f(w_t)e - g_\eta(f(w_t - v_t e))e\right) - v_t e \\
                  & = T(w_t) - T(w_t) + v_t e - \left(f(w_t)e - g_\eta(f(w_t) - u v_t)e\right) - v_t e \\
                  & = - f(w_t)e + g_\eta(f(w_t) - u v_t)e.
    \end{aligned}
  $$
  From the definition of $g_\eta(\cdot)$, $\dot{v}_t$ can be expressed as follows:
  $$
    \dot{v}_t =
    \left\{
    \begin{array}{lll}
      - f(w_t) + \|r\|_\infty + \eta & \text { for } & f(w_t) - u v_t \geq \|r\|_\infty + \eta,                      \\
      -u v_t                         & \text { for } & - \|r\|_\infty - \eta < f(w_t) - u v_t < \|r\|_\infty + \eta, \\
      - f(w_t) - \|r\|_\infty - \eta & \text { for } & f(w_t) - u v_t \leq - \|r\|_\infty - \eta.
    \end{array}
    \right.
  $$
  Here, we consider a time $T_{\eta'}$ that satisfies the following condition for some $0 < \eta' < \eta$:
  $$
    \begin{aligned}
      |\rho^* - f(w_t)| & \leq |f(q^*) - f(w_t)|                    \\
                        & \leq L\|q^* - w_t\|_\infty                \\
                        & \leq \eta', \ \ \forall t \geq T_{\eta'}.
    \end{aligned}
  $$
  (Lemma \ref{lemma:convergence_of_RVI} ensures the existence of such $T_{\eta'}$.)
  For $t \geq T_{\eta'}$, the following holds:
  $$
    \begin{aligned}
      -\eta'                                              & \leq &  & f(w_t) - \rho^* &  & \leq &  & \eta'                                  \\
      \Rightarrow  - \|r\|_\infty - \eta  < \rho^* -\eta' & \leq &  & f(w_t)          &  & \leq &  & \rho^* + \eta'  < \|r\|_\infty + \eta.
    \end{aligned}
  $$
  Therefore, for $v_t$, the following holds:
  \begin{itemize}
    \item When $f(w_t) - u v_t \geq \|r\|_\infty + \eta \Rightarrow v_t \leq \frac{f(w_t) - \|r\|_\infty - \eta}{u} < 0$, we have
          $$
            \dot{v}_t = - f(w_t) + \|r\|_\infty + \eta > 0
          $$
    \item When $- \|r\|_\infty - \eta < f(w_t) - u v_t < \|r\|_\infty + \eta \Rightarrow \frac{- \|r\|_\infty - \eta }{u} < v_t < \frac{\|r\|_\infty + \eta}{u}$,
          $$
            \dot{v}_t = -u v_t
          $$
    \item When $f(w_t) - u v_t \leq - \|r\|_\infty - \eta \Rightarrow v_t \geq \frac{f(w_t) + \|r\|_\infty + \eta}{u} > 0$, we have
          $$
            \dot{v}_t = - f(w_t) - \|r\|_\infty - \eta < 0
          $$
  \end{itemize}
  Setting the Lyapunov function as $V(v) = \frac{1}{2} v^2$, $\dot{V}(v_t) = v_t \dot{v}_t$ is such that $\dot{V}(v_t) = 0$ when $v_t = 0$, and $\dot{V}(v_t) < 0$ when $v_t \neq 0$.
  Thus, from the Lyapunov Second Method, $v=0$ is a globally asymptotically stable point.
  Therefore, for any initial $v_{T_{\eta'}}$, we can achieve $v_t \rightarrow 0$, leading to $\lim_{t\rightarrow\infty}\|q^* - Q_t\|_\infty = 0$.
  Hence, the ODE in Equation \ref{eq:ODE_Q_rewritten} is globally asymptotically stable at $q^*$.
\end{proof}

\subsubsection{Verification of other assumptions}

We verify the remaining assumptions (Assumption \ref{assumption:Lipschitz}, \ref{assumption:visitation_freq}, \ref{assumption:step_size1}, \ref{assumption:step_size2} and \ref{assumption:martingale_difference}).

First, regarding Assumption \ref{assumption:Lipschitz}, the function $h$ corresponds to $H_1(\xi, Q) - \xi$, and the function $f$ corresponds to $H_2(\xi, Q) - Q$.
It is clear that all terms constituting these functions are Lipschitz continuous.
Therefore, the overall functions are also Lipschitz continuous, satisfying Assumption \ref{assumption:Lipschitz}.


Assumption \ref{assumption:visitation_freq} concerns the update frequency of elements in the vector being updated during the learning process, which is commonly used in asynchronous SA \cite{borkar2009stochastic,Abounadi2001RVI,Wan2020DifferentialQLearning}.
Since $\xi_k$, corresponding to $x_k$, is a scalar, it clearly satisfies this assumption.
For the updates of $Q_k$, corresponding to $y_k$, we introduce the following assumption on the learning process of our proposed method.
\begin{assumption}
  \label{assumption:visitation_freq_mdp}
  There exists $\Delta>0$ such that

  $$
    \liminf _{k \rightarrow \infty} \frac{\nu(k, s, a)}{k+1} \geq \Delta
  $$

  \textit{almost surely}, for all $s \in \mathcal{S}, a \in \mathcal{A}$.
  Furthermore, if, for $b(\cdot)$ and $x>0$,
  $$
    N'(k, x) =\min \left\{m>k: \sum_{i=k+1}^{m} \overline{b}(i) \geq x\right\},
  $$
  where $\overline{b}(i) = \nu(i, \phi_i)$ , then the limits
  $$
    \lim_{k \rightarrow \infty} \frac{\sum_{m=\nu_{2}(k,s',a')}^{\nu_{2}\left(N'(k, x), s', a'\right)} b(m)}{\sum_{m=\nu_{2}(k, s, a)}^{\nu_{2}\left(N'(n, x),s,a\right)} b(m)}
  $$

  exist \textit{almost surely} for all $s,a,s',a'$
\end{assumption}


Assumption \ref{assumption:step_size1} is a common assumption about step sizes in asynchronous SA, typically used in various studies \cite{borkar2009stochastic,Abounadi2001RVI,Wan2020DifferentialQLearning}.
Assumption \ref{assumption:step_size2} is a standard assumption for two time-scale SA, as found in the literature \cite{borkar2009stochastic,Borkar1997TwoTimeScalesSA,GOSAVI2004654,Konda1999ActorCriticSA}.
In our proposed method, we assume the selection of step sizes that satisfy these assumptions.


Assumption \ref{assumption:martingale_difference} is about the noise term.
The assumption regarding the mean of the noise is satisfied because, by the definition of noise (Equation \ref{eq:definition_w}), the noise is the difference between the sample and its conditional expectation.
The assumption regarding the variance of the noise can be easily verified by Assumption \ref{assumption:f(Q;X)property} and applying the triangle inequality.


Based on the above discussion, for a finite MDP, when Assumptions \ref{assumption:ergodic}, \ref{assumption:f(Q;X)property}, \ref{assumption:contractive_mdp}, \ref{assumption:step_size1}, \ref{assumption:step_size2}, and \ref{assumption:visitation_freq_mdp} are satisfied,
it has been shown that our proposed update equations \ref{eq:Proposed_RVI_Q_Update} and \ref{eq:Proposed_RVI_fQ_Update} converge \textit{almost surely} $Q_k \rightarrow q^*$ and $\xi_k \rightarrow f(q^*)$.

\section{Proof of the Average Reward Soft Policy Improvement (Theorem \ref{thm:Average_Reward_Soft_Policy_Improvement})}
\label{sec:Soft_Policy_Improvement}

In this section, we prove Theorem \ref{thm:Average_Reward_Soft_Policy_Improvement} as introduced in Section \ref{sec:Average_Reward_Soft_Policy_Improvement_Theorem}.
From the definition of the average reward soft-Q function in Equation \ref{eq:soft_average_reward_Q}, the following equation holds:
$$
  Q^{\pi}(s, a) = r(s, a) - \rho^{\pi}_{\text{soft}} + \underE{s' \sim p(\cdot|s,a) \\ a' \sim \pi(\cdot|s')}{Q^{\pi}(s', a') -  \log \pi(a'|s')}. \\
$$
Using this, we perform the following algebraic transformation with respect to$ \rho^{\pi_{\text{old}}}_{\text{soft}}$:
$$
  \begin{aligned}
    \rho^{\pi_{\text{old}}}_{\text{soft}} & = r(s, a) - Q^{\pi_{\text{old}}}(s, a) + \underE{s' \sim p(\cdot|s,a)                                                                                                                                                                                  \\ a' \sim \pi_{\text{old}}(\cdot|s')}{Q^{\pi_{\text{old}}}(s', a') - \log \pi_{\text{old}}(a'|s')} \\
                                          & = \underE{(s,a) \sim d^{\pi_{\text{new}}}(\cdot,\cdot)}{r(s, a) - Q^{\pi_{\text{old}}}(s, a) + \underE{s' \sim p(\cdot|s,a)                                                                                                                            \\ a' \sim \pi_{\text{old}}(\cdot|s')}{Q^{\pi_{\text{old}}}(s', a') - \log \pi_{\text{old}}(a'|s')}} \\
                                          & = \rho^{{\pi_{\text{new}}}}_{\text{soft}} - \underE{(s,a) \sim d^{{\pi_{\text{new}}}}(\cdot,\cdot)}{ - \log \pi_{\text{new}}(a|s)} \\
                                          & + \underE{(s,a) \sim d^{\pi_{\text{new}}}(\cdot,\cdot)}{- Q^{\pi_{\text{old}}}(s, a) + \underE{s' \sim p(\cdot|s,a) \\ a' \sim \pi_{\text{old}}(\cdot|s')}{Q^{\pi_{\text{old}}}(s', a') - \log \pi_{\text{old}}(a'|s')}} \\
                                          & \cdots.
  \end{aligned}
$$
Here, according to \citet{Haarnoja2018SAC,Haarnoja2018SACv2}, with the update in Equation \ref{eq:soft_policy_update}, the following relationship holds for $\pi_{\text{old}}$ and $\pi_{\text{new}}$:
$$
  \underE{a \sim \pi_{\text{new}}(\cdot|s)}{Q^{\pi_{\text{old}}}(s, a) - \log \pi_{\text{new}}(a|s)} \geq \underE{a \sim \pi_{\text{old}}(\cdot|s)}{Q^{\pi_{\text{old}}}(s, a) - \log \pi_{\text{old}}(a|s)}, \forall s \in \mathcal{S}.
$$
Continuing the transformation with this, we get:
\small
$$
  \begin{aligned}
    \rho^{\pi_{\text{old}}}_{\text{soft}} & = \cdots                                                                                                                                                                                                                                                                                                                                                                                                                                      \\
                                          & \leq \rho^{{\pi_{\text{new}}}}_{\text{soft}} - \underE{(s,a) \sim d^{{\pi_{\text{new}}}}(\cdot,\cdot)}{ - \log \pi_{\text{new}}(a|s)} + \underE{(s,a) \sim d^{\pi_{\text{new}}}(\cdot,\cdot)}{- Q^{\pi_{\text{old}}}(s, a) + \underE{s' \sim p(\cdot|s,a)                                                                                                                                                                                     \\ a' \sim \pi_{\text{new}}(\cdot|s')}{ Q^{\pi_{\text{old}}}(s', a') - \log \pi_{\text{new}}(a'|s') }} \\
                                          & = \rho^{{\pi_{\text{new}}}}_{\text{soft}} \cancel{ - \underE{(s,a) \sim d^{{\pi_{\text{new}}}}(\cdot,\cdot)}{ - \log \pi_{\text{new}}(a|s)}}  \cancel{+ \underE{(s,a) \sim d^{\pi_{\text{new}}}(\cdot,\cdot)}{- Q^{\pi_{\text{old}}}(s, a)}} \\
                                          & \cancel{+ \underE{(s,a) \sim d^{\pi_{\text{new}}}(\cdot,\cdot)}{- Q^{\pi_{\text{old}}}(s, a)}} + \cancel{\underE{(s,a) \sim d^{{\pi_{\text{new}}}}(\cdot,\cdot)}{ - \log \pi_{\text{new}}(a|s)}} \\
                                          & = \rho^{{\pi_{\text{new}}}}_{\text{soft}}.
  \end{aligned}
$$
\normalsize
Therefore, Theorem \ref{thm:Average_Reward_Soft_Policy_Improvement} has been shown.

\section{Hyperparameter settings}
\label{sec:hyperparameter_settings}

We summarize the hyperparameters used in RVI-SAC and SAC in Table \ref{tab:hyperparameters}.
We used the same hyperparameters for ARO-DDPG as \citet{Saxena2023ARODDPG}.

\begin{table}[H]
  \centering
  \begin{tabular}{lll}
    \toprule
                                                           & RVI-SAC       & SAC                 \\
    \midrule
    Discount Factor  $\gamma$                              & N/A           & [0.97, 0.99, 0.999] \\
    Optimizer                                              & Adam          & Adam                \\
    Learning Rate                                          & 3e-4          & 3e-4                \\
    Batch Size   $|\mathcal{B}|$                           & 256           & 256                 \\
    Replay Buffer Size $|\mathcal{D}|$                     & 1e6           & 1e6                 \\
    Critic Network                                         & [256, 256]    & [256, 256]          \\
    Actor Network                                          & [256, 256]    & [256, 256]          \\
    Activation Function                                    & ReLU          & ReLU                \\
    Target Smoothing Coefficient $\tau$                    & 5e-3          & 5e-3                \\
    Entrpy Target $\overline{\mathcal{H}}$                 & - dim\_action & - dim\_action       \\
    Critc Network for Reset                                & [64, 64]      & N/A                 \\
    Delayd f(Q) Update Parameter $\kappa$                  & 5e-3          & N/A                 \\
    Termination Frequency Target $\epsilon_{\text{reset}}$ & 1e-3          & N/A                 \\
    \bottomrule
  \end{tabular}
  \caption{Hyperparameters of RVI-SAC and SAC.}
  \label{tab:hyperparameters}
\end{table}

\section{Additional Results}

\subsection{Average reward evaluation}
\label{sec:appendix_average_reward_evaluation}

Figure \ref{fig:average_reward} shows the learning curves of RVI-SAC, SAC with various discount rates, and ARO-DDPG when the evaluation metric is set as average reward (total\_return / survival\_step).
Note that in the Swimmer and HalfCheetah environments, where there is no termination, the results evaluated by average reward are the same as those evaluated by total return (shown in Figure \ref{fig:main_result}).
These results, similar to those shown in Figure \ref{fig:main_result}, demonstrate that RVI-SAC also exhibits overall higher performance in terms of average reward.
\begin{figure*}[htp]
  \begin{subfigure}{1.0\textwidth}
    \centering
    \includegraphics[scale=0.5]{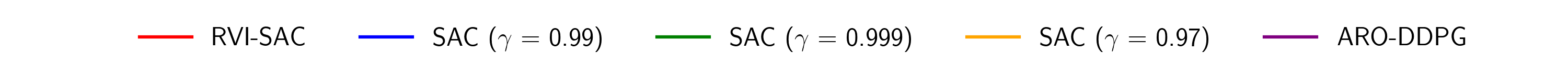}
  \end{subfigure}
  \centering
  \begin{subfigure}{.3\textwidth}
    \centering
    \includegraphics[width=\linewidth]{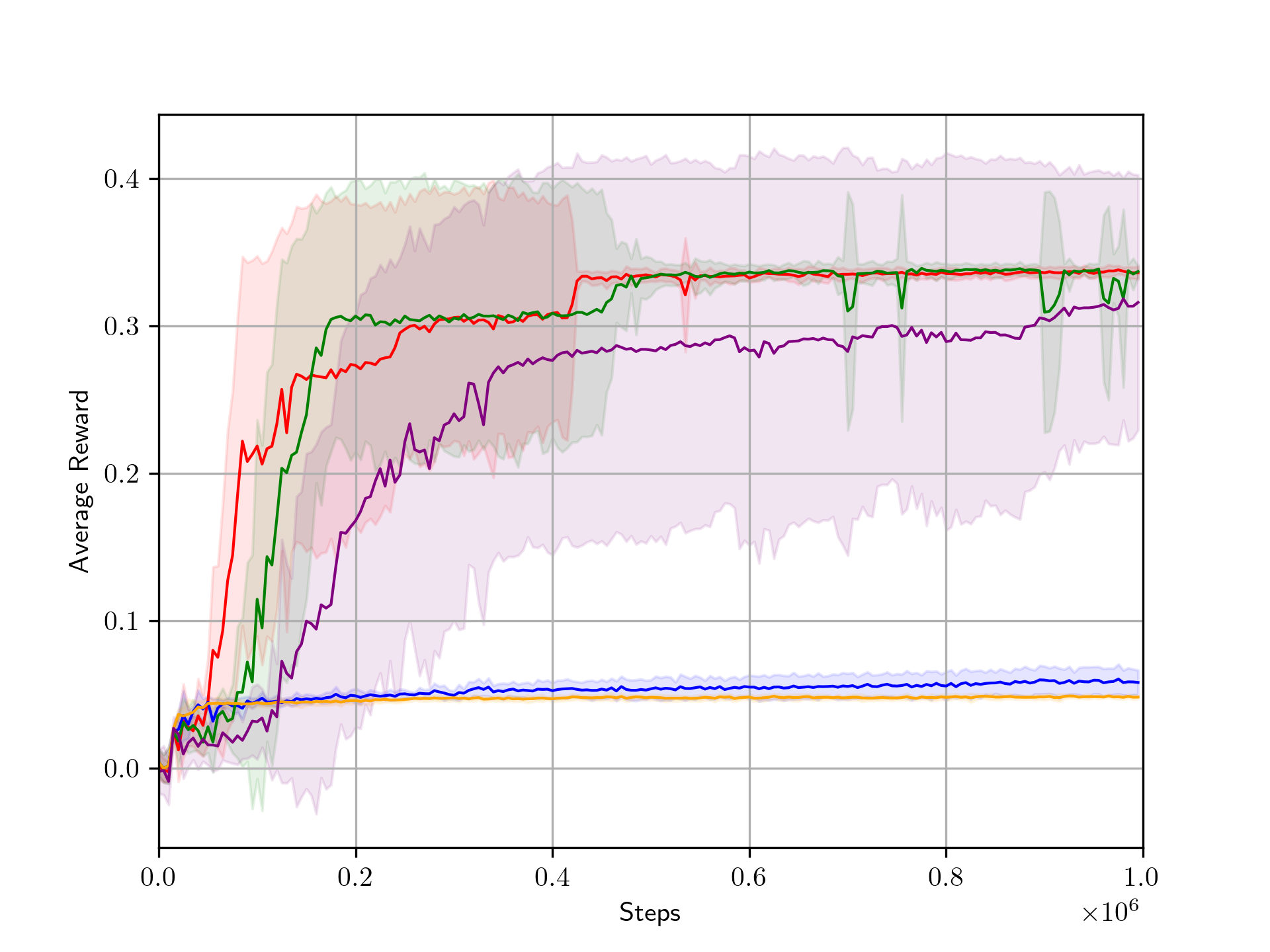}
    \caption{Swimmer}
    \label{fig:average_reward_swimmer}
  \end{subfigure}
  \begin{subfigure}{.3\textwidth}
    \centering
    \includegraphics[width=\linewidth]{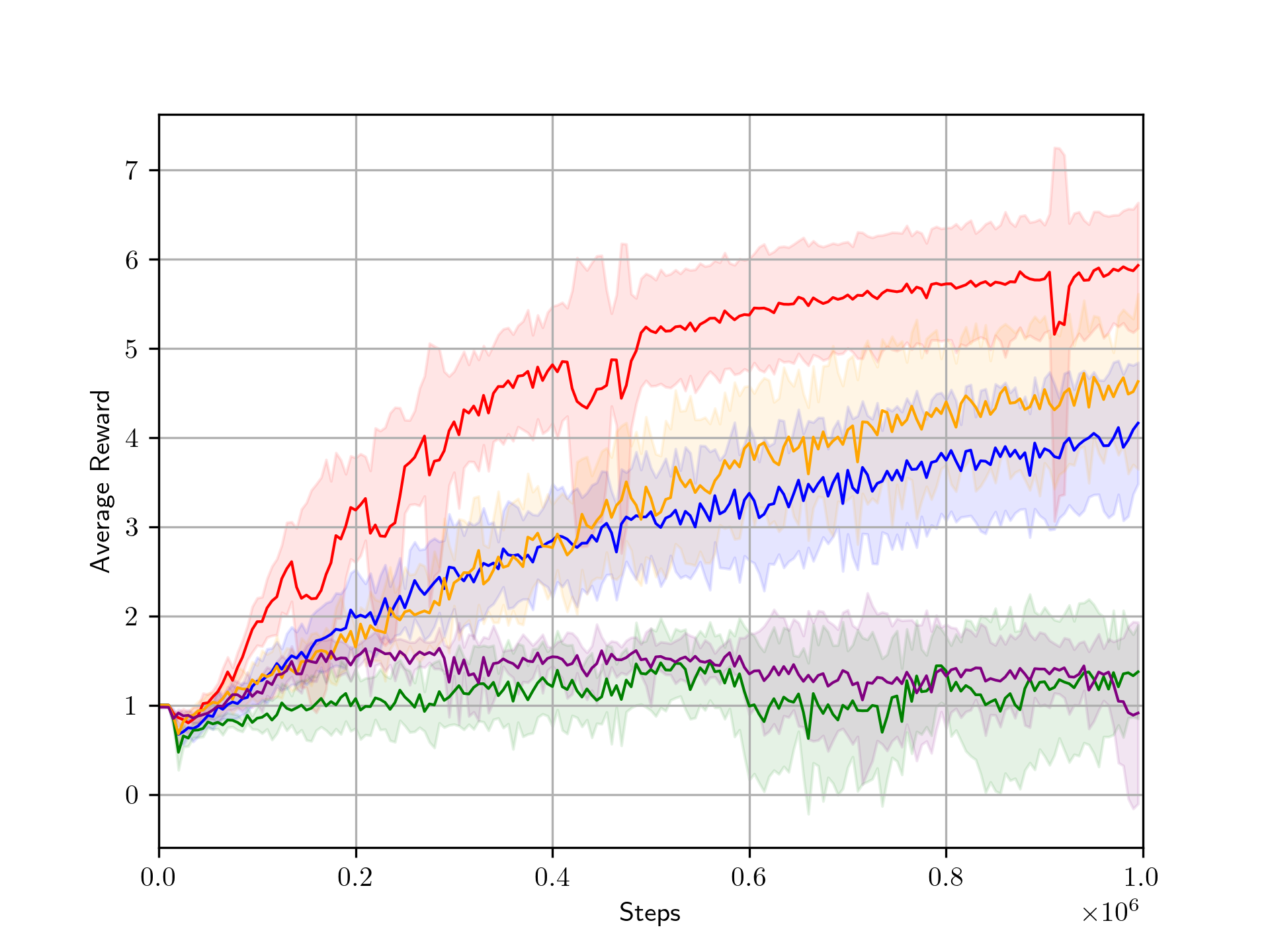}
    \caption{Ant}
    \label{fig:average_reward_ant}
  \end{subfigure}
  \begin{subfigure}{.3\textwidth}
    \centering
    \includegraphics[width=\linewidth]{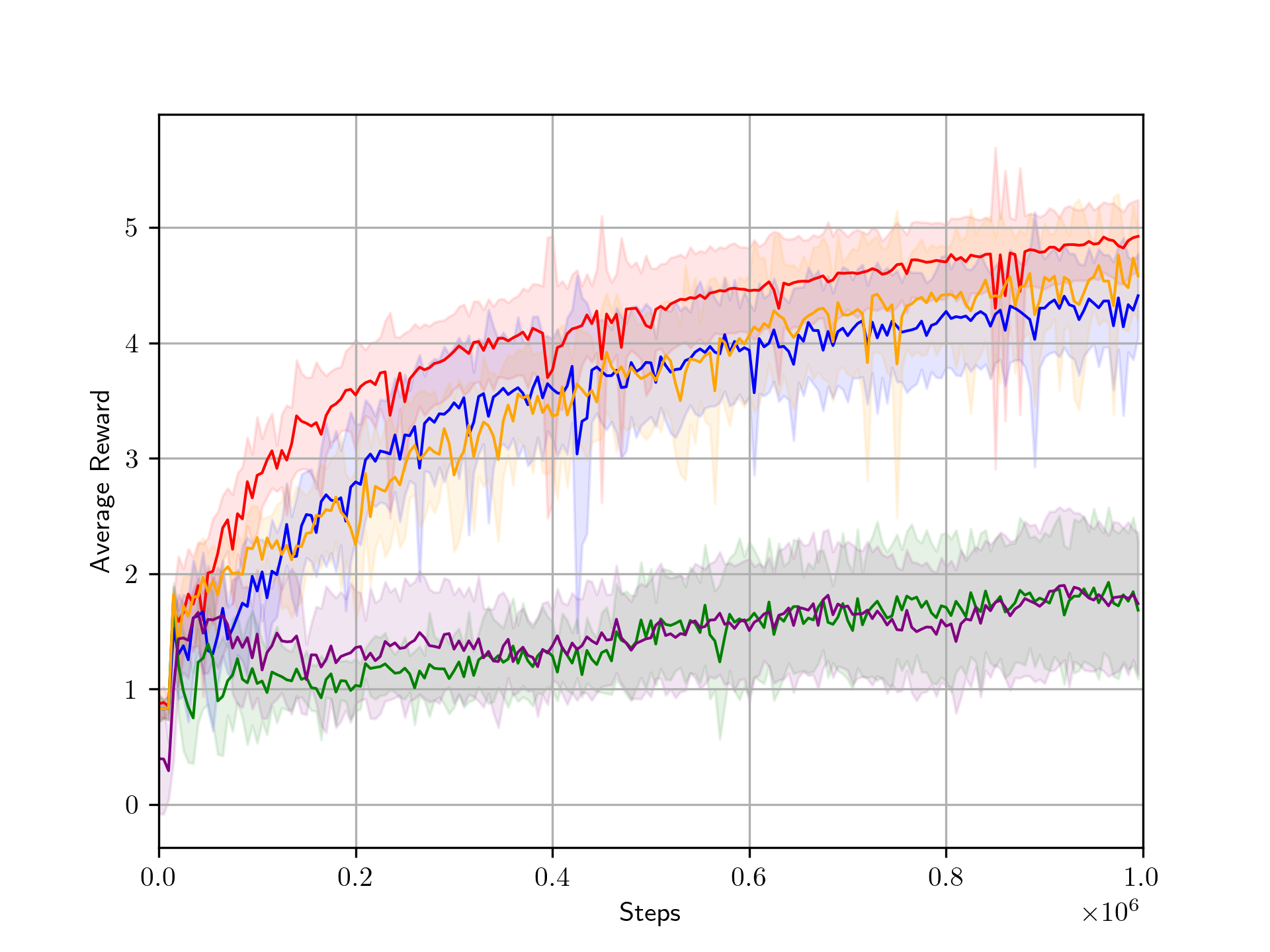}
    \caption{Walker2d}
    \label{fig:average_reward_walker2d}
  \end{subfigure}%

  \begin{subfigure}{.3\textwidth}
    \centering
    \includegraphics[width=\linewidth]{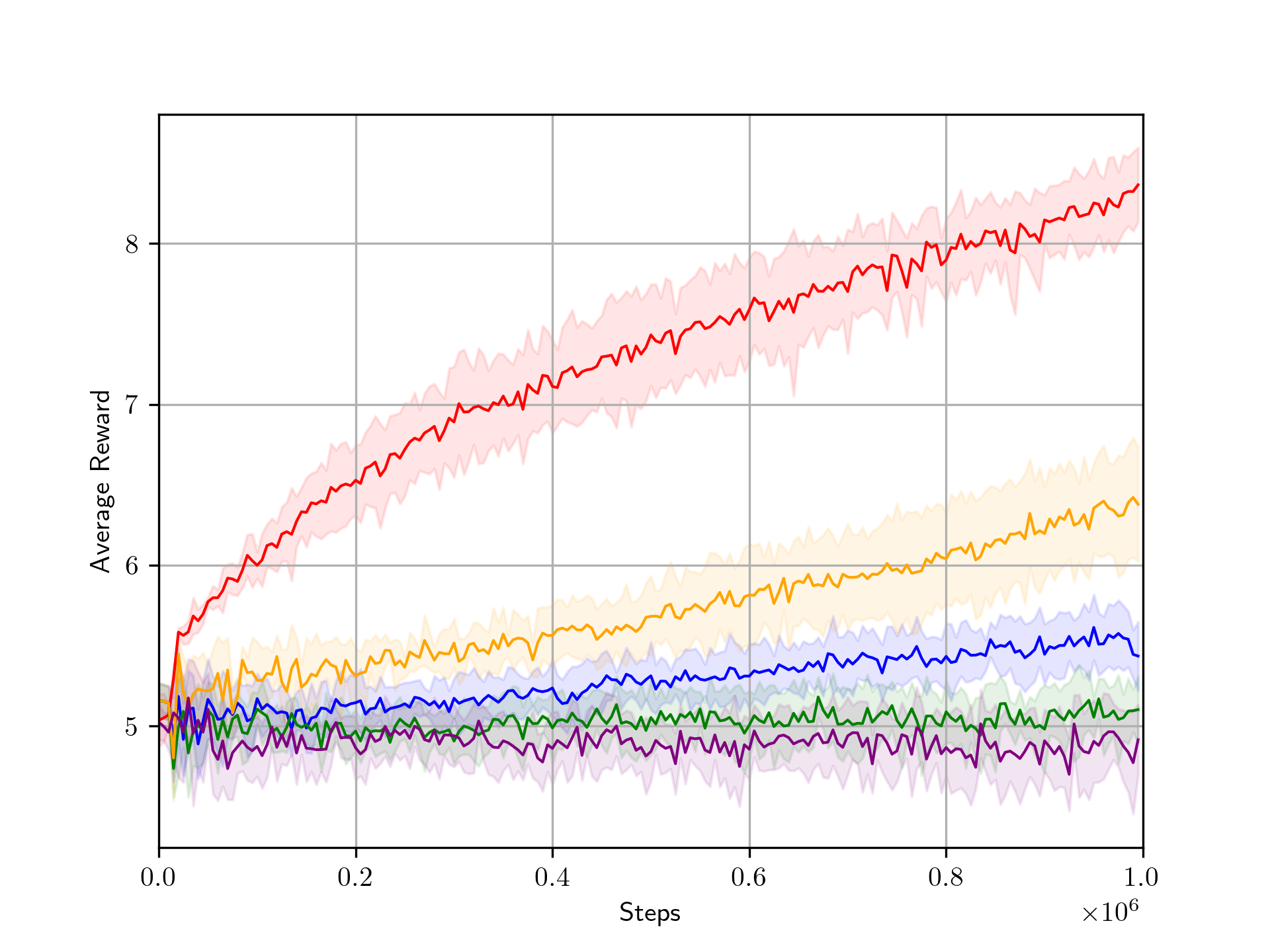}
    \caption{Humanoid}
    \label{fig:average_reward_humanoid}
  \end{subfigure}
  \begin{subfigure}{.3\textwidth}
    \centering
    \includegraphics[width=\linewidth]{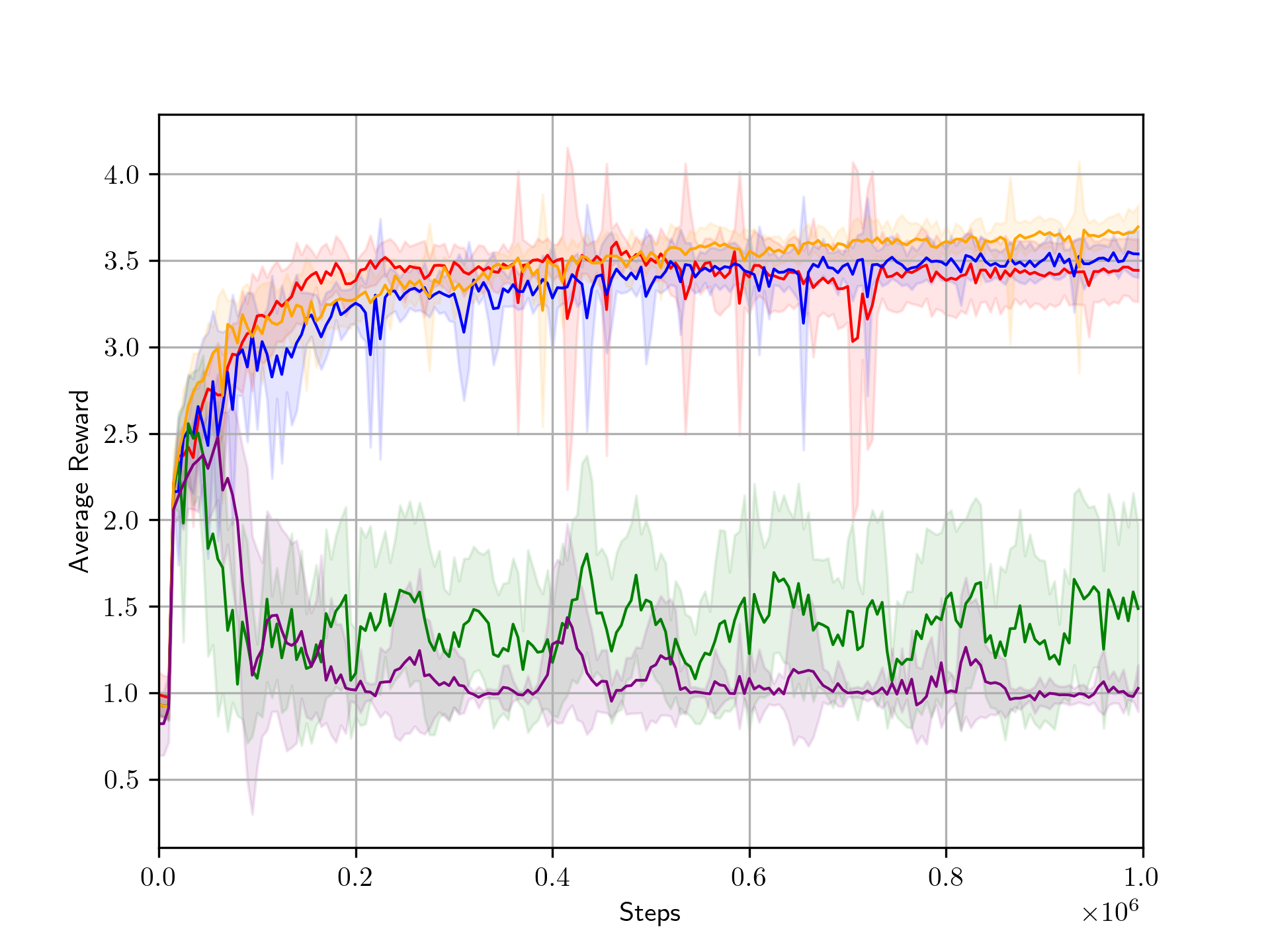}
    \caption{Hopper}
    \label{fig:average_reward_hopper}
  \end{subfigure}
  \begin{subfigure}{.3\textwidth}
    \centering
    \includegraphics[width=\linewidth]{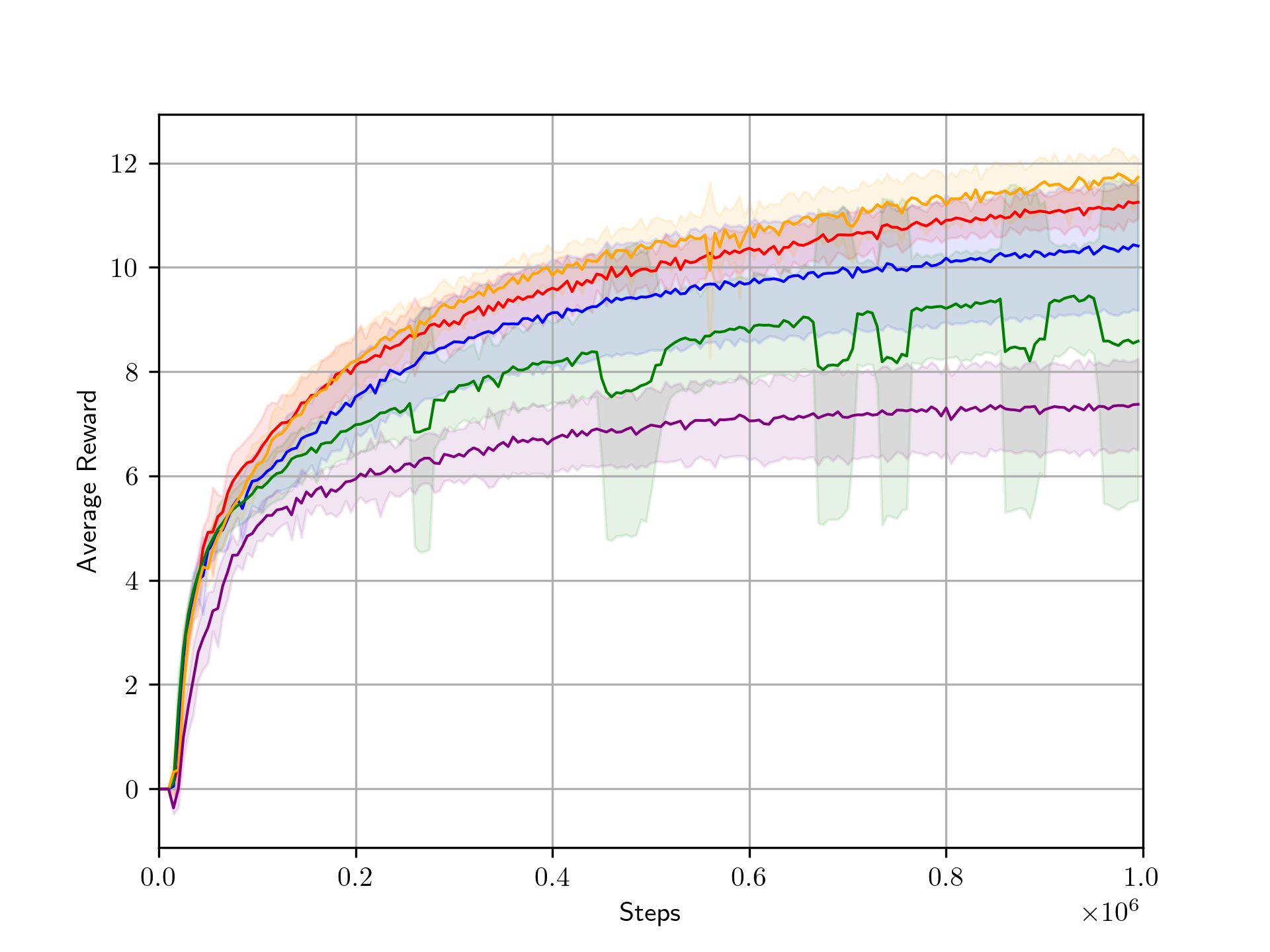}
    \caption{HalfCheetah}
    \label{fig:average_reward_halfcheetah}
  \end{subfigure}
  \caption{Learning curves for the Gymnasium's Mujoco tasks. The horizontal axis represents Steps, and the vertical axis represents the evaluation value (average reward).
    Lines and shades represent the mean and standard deviation of the evaluation values over 10 trials, respectively.}
  \label{fig:average_reward}
\end{figure*}

\subsection{Perfomance Comparison with SAC and ARO-DDPG with Reset}
\label{sec:appendix_sac_with_reset}

Figure \ref{fig:sac_with_reset_all} presents the learning curves for all environments with termination (Ant, Hopper, Walker2d and Humanoid), similar to Figure \ref{fig:sac_with_reset} in Section \ref{sec:design_evaluation}, comparing RVI-SAC with SAC  with automatic Reset Cost adjustment and ARO-DDPG with automatic Reset Cost adjustment.
Here, the discount rate of SAC is set to $\gamma = 0.99$.
The results demonstrate that RVI-SAC outperforms SAC with automatic Reset Cost adjustment and ARO-DDPG  with automatic Reset Cost adjustment across all environments.

\begin{figure*}[htp]

  \begin{subfigure}{1.0\textwidth}
    \centering
    \includegraphics[scale=0.6]{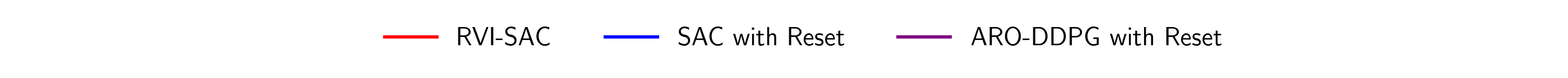}
  \end{subfigure}

  \centering
  \begin{subfigure}{.3\textwidth}
    \centering
    \includegraphics[width=\linewidth]{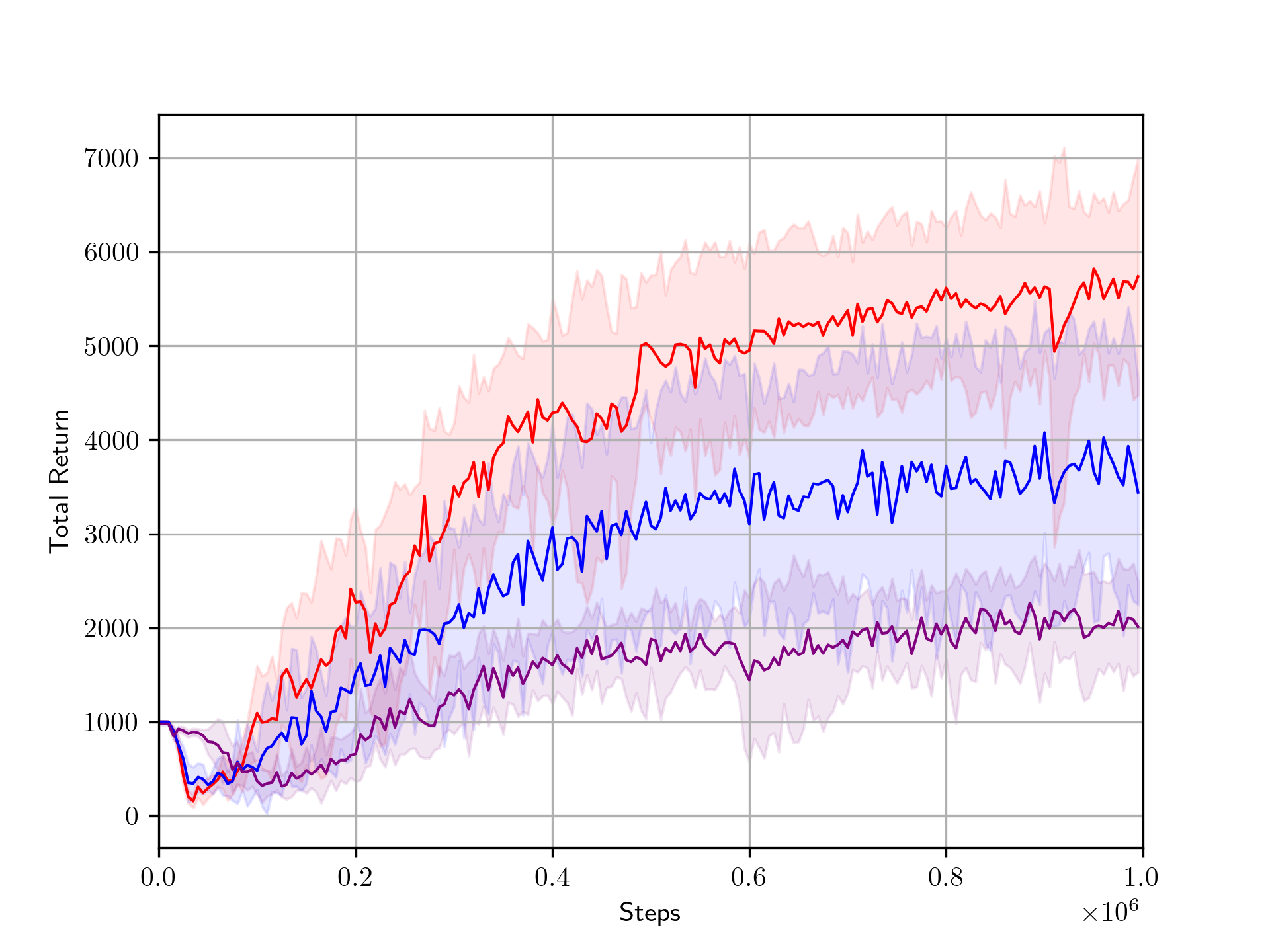}
    \caption{Ant}
  \end{subfigure}
  \begin{subfigure}{.3\textwidth}
    \centering
    \includegraphics[width=\linewidth]{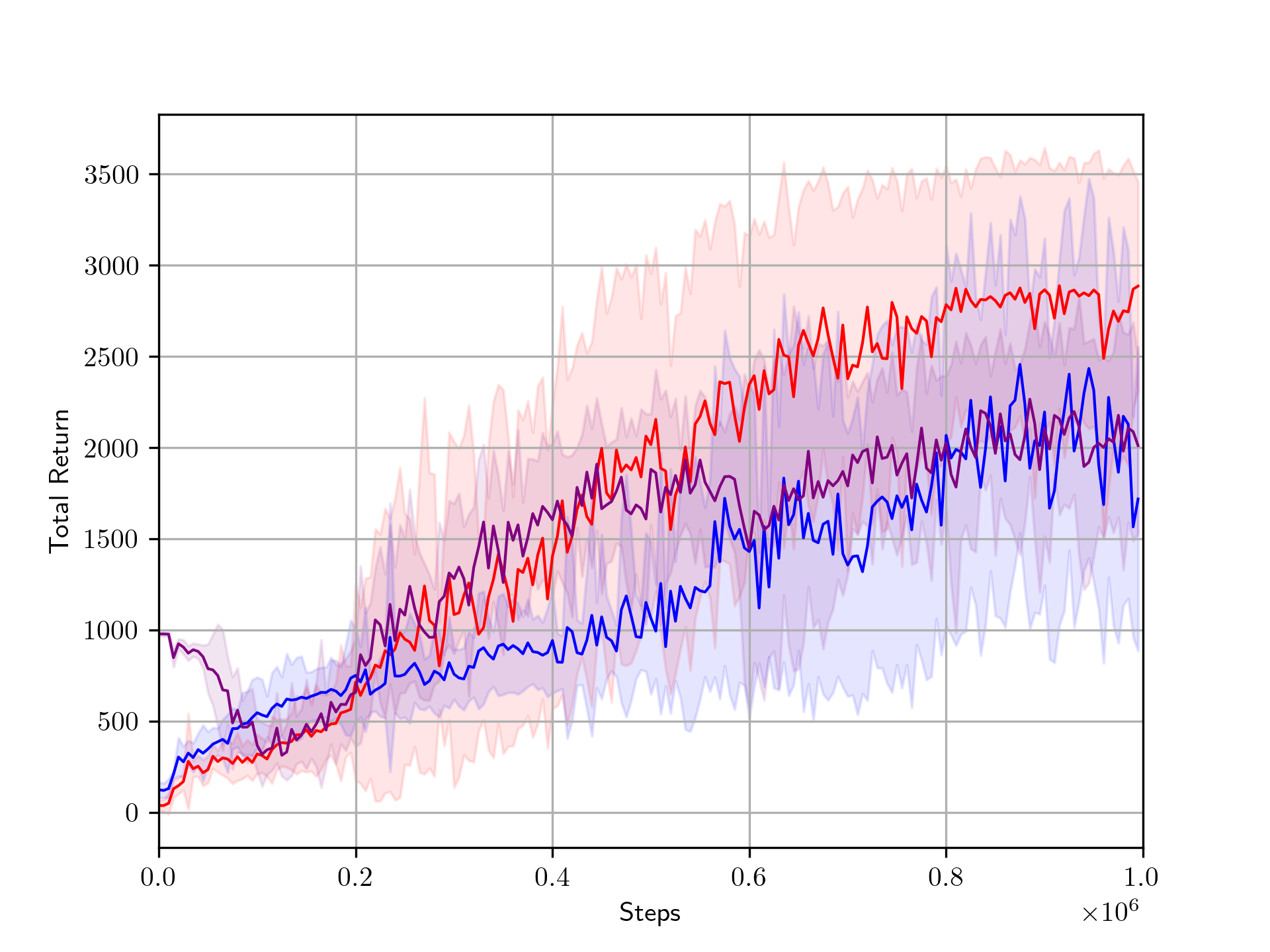}
    \caption{Hopper}
  \end{subfigure}
  \begin{subfigure}{.3\textwidth}
    \centering
    \includegraphics[width=\linewidth]{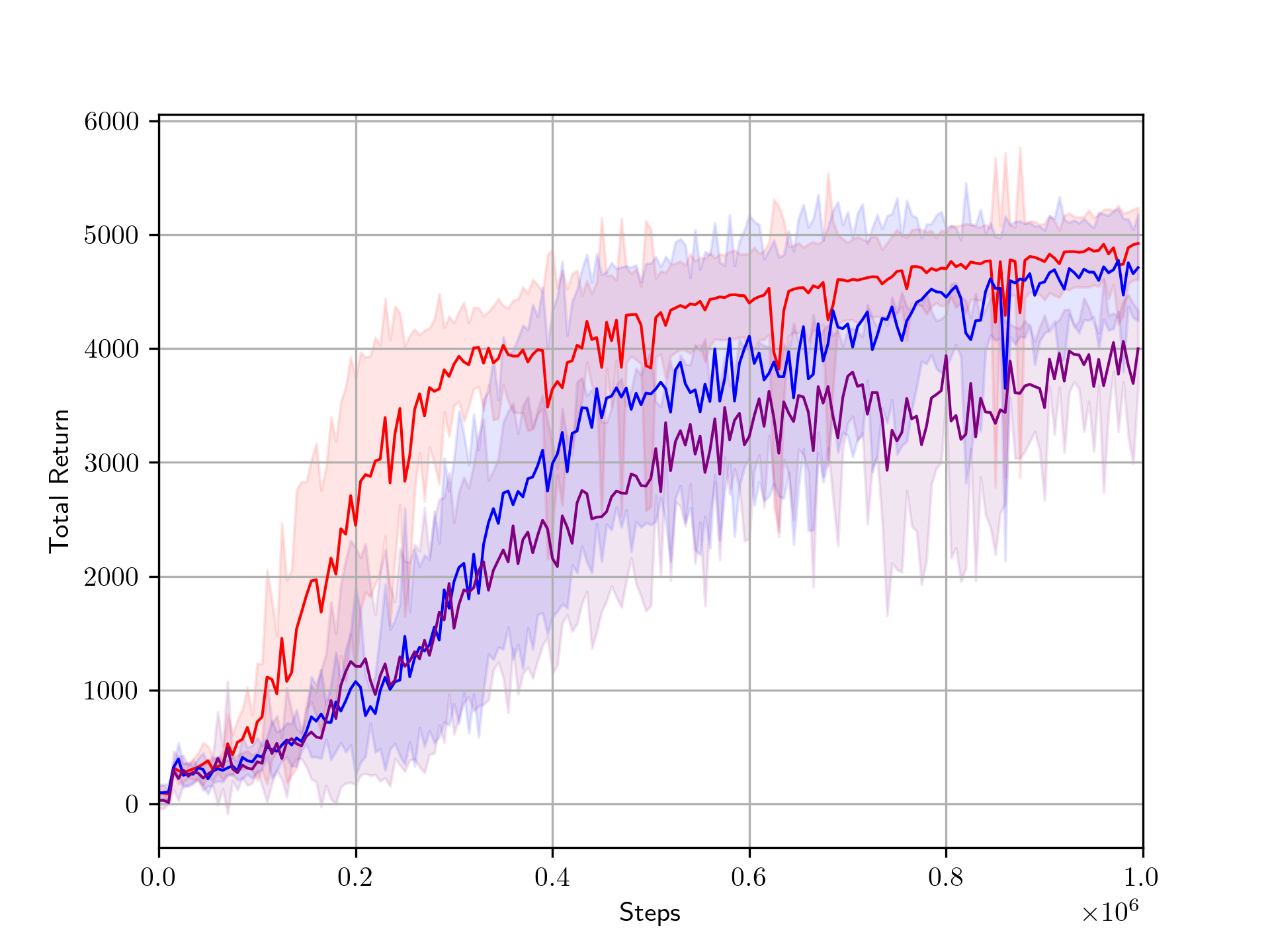}
    \caption{Walker2d}
  \end{subfigure}
  \begin{subfigure}{.3\textwidth}
    \centering
    \includegraphics[width=\linewidth]{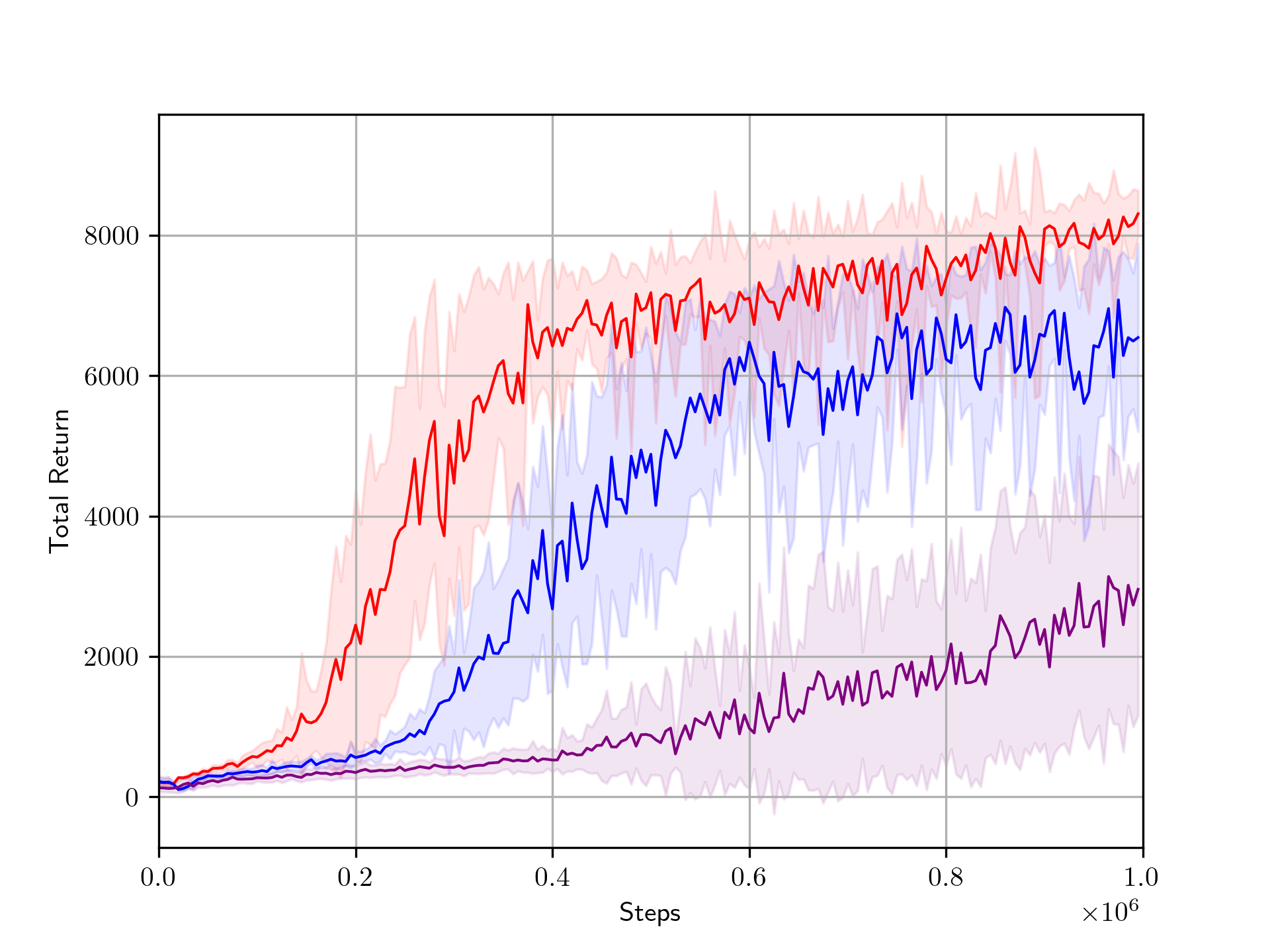}
    \caption{Humanoid}
  \end{subfigure}
  \caption{
    This figure represent learning curves for all environments with termination, compare RVI-SAC (red) with SAC (blue)  with automatic Reset Cost adjustment and ARO-DDPG(purple) with automatic Reset Cost adjustment.
  }
  \label{fig:sac_with_reset_all}
\end{figure*}

\end{document}